\ificcvfinal\pagestyle{empty}\fi
\newtheorem{theorem}{Theorem}
\def\Algnameabbr{DISPEL}
\begin{document}

\title{\Algnameabbr{}: Domain Generalization via Domain-Specific Liberating}

\author{Chia-Yuan Chang\\
Texas A\&M University\\
{\tt\small cychang@tamu.edu}
\and
Yu-Neng Chuang\\
Rice University\\
{\tt\small ynchuang@rice.edu}
\and
Guanchu Wang\\
Rice University\\
{\tt\small gw22@rice.edu}
\and
Mengnan Du\\
New Jersey Institute of Technology\\
{\tt\small mengnan.du@njit.edu}
\and
Na Zou\\
Texas A\&M University\\
{\tt\small nzou1@tamu.edu}
}

\maketitle
\ificcvfinal\thispagestyle{empty}\fi

\begin{abstract}








   Domain generalization aims to learn a generalization model that can perform well on unseen test domains by only training on limited source domains. 
   However, existing domain generalization approaches often bring in prediction-irrelevant noise or require the collection of domain labels. 
   To address these challenges, we consider the domain generalization problem from a different perspective by categorizing underlying feature groups into domain-shared and domain-specific features. 
   Nevertheless, the domain-specific features are difficult to be identified and distinguished from the input data.
   In this work, we propose \underline{D}oma\underline{I}n-\underline{SPE}cific \underline{L}iberating (\Algnameabbr{}), a post-processing fine-grained masking approach that can filter out undefined and indistinguishable domain-specific features in the embedding space. Specifically, \Algnameabbr{} utilizes a mask generator that produces a unique mask for each input data to filter domain-specific features. 
   The \Algnameabbr{} framework is highly flexible to be applied to any fine-tuned models.
   We derive a generalization error bound to guarantee the generalization performance by optimizing a designed objective loss. 
   The experimental results on five benchmarks demonstrate \Algnameabbr{} outperforms existing methods and can further generalize various algorithms. 
   
\end{abstract}

\section{Introduction}
\label{sec:intro}









\begin{figure}[t!]
\centerline{\includegraphics[width=0.48\textwidth]{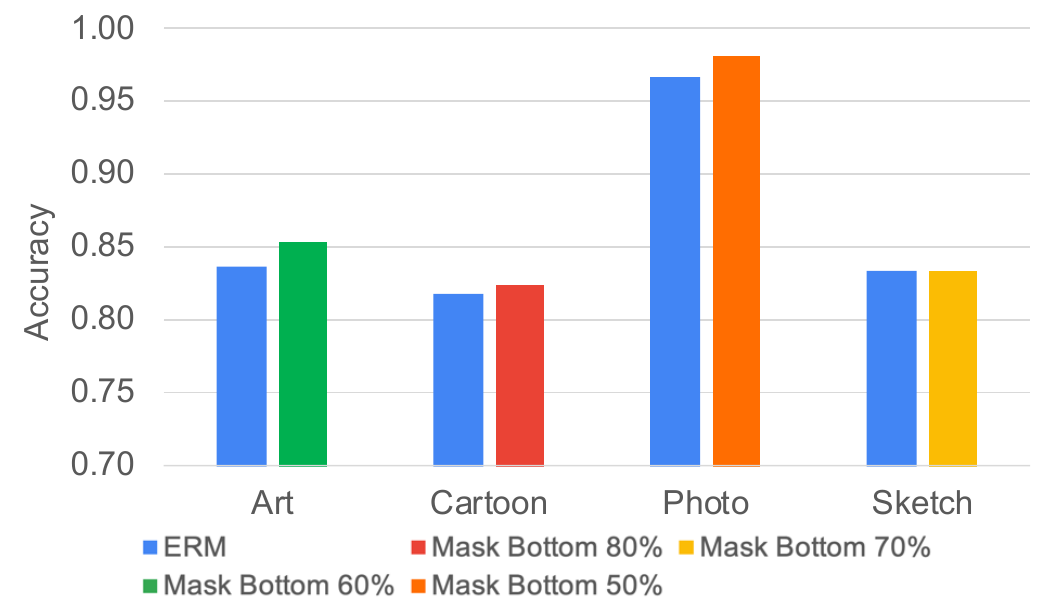}}
\caption{Efficacy of Global Masking (see Sec.~\ref{sec:global_mask}) on each unseen domain of the PACS dataset. The Mask Bottom p\% describes the percentage of masked embedding dimensions. We present the most effective p\% for each unseen domain.}
\label{fig:prelim_bar_chart}
\vspace{-0.3cm}
\end{figure}

Deep neural network (DNN) models have achieved impressive results in various fields, including object recognition~\cite{wei2019adversarial, tan2020equalization, de2019does, barbu2019objectnet, chen2019destruction}, semantic segmentation~\cite{strudel2021segmenter, cheng2021per, xie2021segformer, wang2021exploring, zheng2021rethinking}, and object detection~\cite{dai2021dynamic, joseph2021towards, tan2020efficientdet, carion2020end, chang2021multi, xie2021detco}. However, the distribution of test data in the real-world applications may be statistically different from training data, causing DNN models to severely suffer performance drops. In this case, domain generalization research aims to address the challenges of distribution difference, known as the domain shifting problem. Domain generalization is essential for real-world applications where collecting representative training data may be difficult or costly. Therefore, it is crucial to have models with good generalization properties trained on limited domain training data.

Existing algorithms for domain generalization can be categorized into \textcolor{black}{two branches}: data manipulation and representation learning.
\textcolor{black}{The first branch}, data manipulation~\cite{tobin2017domain, tremblay2018training, volpi2018generalizing, volpi2019addressing, shi2020towards, xu2020adversarial, yan2020improve, qiao2020learning}, focuses on reducing overfitting issues by increasing the diversity and quantity of available training data through data augmentation methods or generative models. Two well-known studies in this branch are Mixup~\cite{zhang2017mixup, wang2020heterogeneous} and Randomization~\cite{peng2018sim}. 
\textcolor{black}{The second branch}, representation learning~\cite{khosla2012undoing, huang2020self, li2018learning, li2018domain, ganin2016domain, li2018deep, li2018domaincidg, chattopadhyay2020learning, piratla2020efficient, cha2022domain}, aims to learn an encoder that generates  invariant embeddings among different domains. This branch includes popular algorithms such as CORAL~\cite{sun2016deep}, IRM~\cite{arjovsky2019invariant}, and DRO~\cite{sagawa2019distributionally}.
Beyond algorithms design, model selection during the training stage can also influence generalization performance. 
\textcolor{black}{A recent research}~\cite{gulrajani2020search} proposes two model selection settings in which the selected models can be trained to achieve good generalization results.
However, there are several limitations among existing approaches.
First, data manipulation methods require highly engineered efforts and can introduce too much prediction-irrelevant knowledge label noise, which will degrade prediction performance. Second, the penalty regularizations of representation learning often require domain labels or information to train domain-invariant encoder; and the existing model selection settings also need the use of domain labels to obtain desirable validation subsets. Nonetheless, obtaining these domain labels can be a significant expense or may not be feasible in real-world applications.

\begin{figure*}[t!]
\centerline{\includegraphics[width=0.9\textwidth]{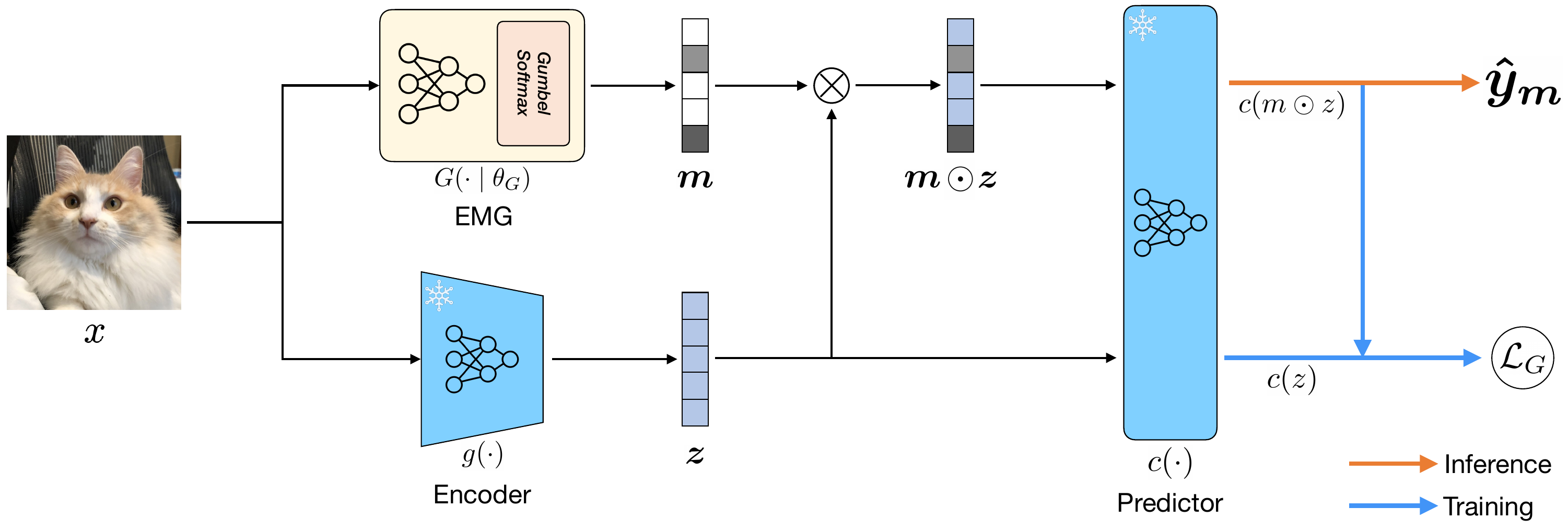}}
\caption{An overview of the proposed framework \Algnameabbr{}, where EMG 
refers to Embedding Mask Generator (see Sec.~\ref{sec:emg}), and ${\mathcal{L}}_{G}$ denotes the objective loss (see Eq.~\ref{eq:obj}).
The framework \Algnameabbr{} first splits a given frozen fine-tuned model into an encoder $g(\cdot)$ and a predictor $c(\cdot)$, and then updates EMG by minimizing cross entropy loss between $c(\boldsymbol{m} \odot \boldsymbol{z})$ and $c(\boldsymbol{z})$.}
\label{fig:model_layout}
\end{figure*}

To develop a novel solution branch without the limitations above, we consider the domain generalization problem from a different perspective by categorizing underlying feature groups, i.e., \emph{domain-shared and domain-specific features}.
The domain-shared features generally exist in all domains, and the domain-specific features only exist in certain domains.
The prediction models typically experience a drop in performance when tested on unseen domains that are excluded in training data. This is because the prediction models are trained on both domain-shared and domain-specific features, which leads to ineffective prediction outcomes on unseen domain data.
Intuitively, we can make a prediction model independent of domain-specific features by eliminating them during inference.
However, there are \textcolor{black}{two difficulties} in using this intuition.
First, domain-specific features are usually hard to be identified or predefined because those features are represented in different formats.
For instance, one domain-specific feature could be the color of objects in a source domain. This means that an object's color is highly correlated with its ground truth. 
Nevertheless, when the domain-specific features are the interweaved relationship between an object's color and stripe, it becomes difficult to recognize the correlation between each domain-specific feature and the ground truths.
Second, domain-specific features are typically difficult to be distinguished directly from the original image.
Taking an ``Art style" car image as an example. It is impalpable what is the ``Art style" car image, which only includes domain-specific features, and what is the car image without any ``style," which only contains domain-shared features.

To address the two difficulties, we perform a post-processing method, a global mask, to filter out domain-specific features in the latent space.
We verify its efficacy by designing a naive global mask to filter out the domain-specific features. The global mask is identified by ranking the Permutation Importance~\cite{fisher2019all} of each embedding dimension.
The results in Fig.~\ref{fig:prelim_bar_chart} show that utilizing the global mask can improve the generalization performance of a fine-tuned\footnote{A fine-tuned model refers to the pre-trained model that has been further trained on a specific task using a training set, where the pre-trained model is pre-trained on a large-scale dataset such as ImageNet.} model on different unseen domains.
However, it is a sub-optimal solution since the global mask does not account for the discrepancy among training instances.
As illustrated in Fig.~\ref{fig:prelim_bar_chart}, the effectiveness of masked dimensions varies for different compositions of training domains.

To account for instance discrepancies among various training domains without relying on domain labels, we propose \underline{D}oma\underline{I}n-\underline{SPE}cific \underline{L}iberating (\Algnameabbr{}), a post-processing fine-grained masking approach that extracts out domain-specific features in the embedding space.
Our proposed framework is illustrated in Fig.~\ref{fig:model_layout}.
\textbf{The key idea of \Algnameabbr{} is to learn a mask generator that automatically generates a distinct mask for each input data, which is to filter domain-specific features from the embedding space.}
The effectiveness of the proposed \Algnameabbr{} is theoretically and empirically demonstrated to improve generalization performance, without utilizing domain labels. 
The experimental results on five benchmark datasets indicate that the proposed \Algnameabbr{} achieves state-of-the-art performance, even without leveraging domain labels and any data augmentation method. This performance exceeds that of algorithms that require domain labels for training.

Our main contributions are as follows:
\begin{itemize}[leftmargin=*]
    \item We propose \Algnameabbr{} to generalize a frozen model by masking out hard-to-identify domain-specific features in the embedding space without adopting domain labels.
    \item Theoretical analysis guarantees that \Algnameabbr{} improves the generalization performance of frozen prediction models by minimizing the generalization error bound.
    \item Experimental results on various benchmarks demonstrate the effectiveness of \Algnameabbr{} and it can also further improve the existing domain generalization algorithms.
\end{itemize}

\section{A Naive Method with Global Masking}


In this section, we will go over the notations used in this paper and demonstrate the efficacy and limitation of a naive global masking strategy.

\subsection{The Problem of Domain Generalization}
A domain is a collection of data drawn from a distribution. 
Given a training domain set $\mathcal{X}$ with a label set $\mathcal{Y}$,
the goal of domain generalization is to find a perfect generalized model $f: \mathcal{X} \rightarrow \mathcal{Y}$ with parameter $\theta \in \Theta$ that can perform well on both training domains and unseen test domains.
Note that unseen test domains are not included in training domains.
Let $\mathcal{X} := \{\mathcal{D}_i\}_{i=1}^{I}$ be a set of training domains and $\mathcal{D}_i$ be a distribution over input space $\mathcal{X}$, where $I$ denotes the total number of training domains.
We denote each training domain as $\mathcal{D}_i = \{\boldsymbol{x}_{j}^{i}\}_{j=1}^\mathcal{J}$ and define a set of unseen domain samples $\mathcal{T} := \{\boldsymbol{x}_{k}^{T}\}_{k=1}^K$, where $\mathcal{J}$ denotes the total number of training domain samples and $K$ represents the total number of unseen domain samples.

\subsection{Globally Masking Domain-Specific Features} 
\label{sec:global_mask}




The performance of prediction models is typically degraded while testing on unseen domains due to accounting for domain-specific features. We experimentally show that masking out the domain-specific features in embedding space during inference can improve the generalization ability of prediction models. Specifically, given a prediction model, we split it into an encoder for mapping input data to embedding space, and a predictor for predicting. Then we leverage a feature explainer for calculating important scores for each embedding dimension, assuming that the less accumulated important scores among training data indicate the more domain-specific dimensions. Following this idea, a global mask can be found by identifying a certain number of domain-specific dimensions. Then we can leverage it to block out the domain-specific embedding dimensions to make the predictor focus on domain-shared features. In our following experiment, the global masks are obtained by calculating permutation importance~\cite{fisher2019all}. 
The results on the PACS dataset shown in Fig.~\ref{fig:prelim_bar_chart}, demonstrating that prediction accuracy on unseen domains of a fine-tuned ERM model can be improved by a global mask blocking out a certain percent of dimensions.

Despite of the efficacy of the global masking method, it is a sub-optimal solution because a global mask cannot consider the discrepancy among instances of different domains for each sample. As shown in Fig.~\ref{fig:prelim_bar_chart}, the effectiveness of masked embedding dimensions varies among different compositions of training domains.
In other words, the improvements raised by a single global mask will vary among different unseen domains.

\section{Domain-Specific Liberating (\Algnameabbr{})} \label{sec:dispel}

To address the limitation of the global masking method as is introduced in Sec.~\ref{sec:global_mask}, we aim to achieve domain generalization by considering a fine-grained masking method.
To this end, in this section, we formally propose the \underline{D}oma\underline{I}n-\underline{SPE}cific \underline{L}iberating (\Algnameabbr{}) framework that prevents a prediction model from focusing on domain-specific features projected on the dimensions of each embedding.

\subsection{Overall \Algnameabbr{} Framework} \label{sec:dispel_frame}

The prediction models being fine-tuned with ERM may have limited generalization performance on an unseen test domain. To address this, we introduce \Algnameabbr{}, a post-processing domain generalization framework that enhances the fine-tuned model's generalization performance without modifying its parameters. Specifically, \Algnameabbr{} improves its generalization by accounting for instance discrepancies among various training domains without using domain labels. We propose a fine-grained masking component named Embedding Mask Generator (EMG) that can generate instance-specific masks for blocking the domain-specific features in the embedding space (see Fig.~\ref{fig:model_layout}). The framework of \Algnameabbr{} is given as follows.
Generally, we freeze the prediction model and split it into an encoder $g(\cdot): \mathcal{X} \rightarrow \mathcal{Z}$ and a predictor $c(\cdot): \mathcal{Z} \rightarrow \mathcal{Y}$, where $\mathcal{Z}$ is the embedding space mapped by $g(\cdot)$.
Given an input instance $\boldsymbol{x} \in \mathcal{X}$, EGM can generate a mask $\boldsymbol{m}$ for embedding $\boldsymbol{z}$ of the input data to filter domain-specific features, where the embedding is encoded by the encoder $g(\boldsymbol{x}) = \boldsymbol{z}$.
Finally, the frozen model can achieve domain generalization on the unseen test domain via the mask generated by EMG.
We will introduce the implementation details in Sec.~\ref{sec:data}.
In the following, we will introduce the details and training process of the proposed EMG component.



\subsection{Embedding Mask Generator (EMG)}
\label{sec:emg}
Considering the inherent problem among training data from different domains, the Embedding Mask Generator (EMG) $G: \mathcal{X} \to \mathbb{R}^\mathrm{d}$ aims to generate an instance-specific mask to mask out domain-specific features corresponding to the input data in embedding space.
Specifically, for a data instance $\boldsymbol{x}$, a domain-specific embedding mask is generated to satisfy $d$-dimentional Binomial distribution $\mathcal{B}( 1-p_1, \cdots, 1-p_\mathrm{d})$, where $[p_1, \cdots, p_\mathrm{d}] = G(\boldsymbol{x} \mid \theta_G)$.
To enable the updating of the base model of the EMG $G(\cdot \mid \theta_G)$ through backward propagation, a temperature-dependent Gumbel Softmax operation is adopted.
Formally, the embedding mask is generated as follows:
\begin{equation}
    \boldsymbol{m}^{i}_j = \frac{\exp\{(\log (\mathbf{1} - \boldsymbol{p}) + \boldsymbol{h}) / \tau \}}{\exp\{(\log (\mathbf{1} - \boldsymbol{p}) +\boldsymbol{h}) / \tau \} + \exp\{(\log (\boldsymbol{p}) +\boldsymbol{h}') / \tau \}}
\label{eq:mask}
\end{equation}
where $\log(\cdot)$ and $\exp(\cdot)$ denote the element-wise operation of a vector; $\tau$ is a temperature hyper-parameter for controlling the degree of discreteness of the generated distribution; and $\boldsymbol{h} = [h_1, \cdots, h_d]$ and $\boldsymbol{h}' = [h'_1, \cdots, h'_d]$ are randomly sampled from $d$-dimensional standard Gumbel distribution $\text{Gumbel}(\mathbf{0}, \mathbf{1})$:
\begin{equation}
    \text{Gumbel}(\mathbf{0}, \mathbf{1}) = -\log(-\log u), u \sim \text{Uniform}(0, 1).
\label{eq:gumbel_dist}
\end{equation}

The $\tau$ in the Gumbel Softmax operation is an implicit regularization that can be used to avoid the trivial solution of EMG outputting all one mask.

\subsection{The Training of EMG Module}
\label{sec:obj_loss}
The goal of \Algnameabbr{} is to improve the generalization of a frozen prediction model by preserving domain-shared features while filtering out domain-specific features via $\boldsymbol{m}^{i}_{j} \odot \boldsymbol{z}^{i}_{j}$.
To achieve the goal, the output of frozen predictor with a masked embedding $c(\boldsymbol{m}^{i}_{j} \odot \boldsymbol{z}^{i}_{j}) = \hat{\boldsymbol{y}}_m$ should be close to the one without being masked $c(\boldsymbol{z}^{i}_{j} ) = \hat{\boldsymbol{y}}$ for all $\boldsymbol{z}^{i}_{j} \in \mathcal{Z}$ and $1 \leq j \!\leq\! \mathcal{J}$.
Following this intuition, EMG can be updated by minimizing the cross entropy loss $L_{CE}$ between $c(\boldsymbol{m}^{i}_{j} \odot \boldsymbol{z}^{i}_{j})$ and $c(\boldsymbol{z}^{i}_{j} )$ as follows:
\begin{align}
    {\mathcal{L}}_{G} 
    \notag &= \min_{{\theta}_{G}} \sum_{i=1}^{I} \sum_{j=1}^{\mathcal{J}} L_{CE} \Big( c(\boldsymbol{m}^{i}_{j} \odot \boldsymbol{z}^{i}_{j}), c(\boldsymbol{z}^{i}_{j} ) \Big) \\
    &= \min_{{\theta}_{G}} \sum_{i=1}^{I} \sum_{j=1}^{\mathcal{J}} L_{CE} \big( \hat{\boldsymbol{y}}_m, ~\hat{\boldsymbol{y}} \big).
\label{eq:obj}
\end{align}

The trained EMG $G(\cdot \mid \theta_G)$ is a domain-specific feature mask generator across multiple domains, which does not require domain labels for training.

\subsection{Generalization Bound of \Algnameabbr{}}


The objective function of EMG module in Eq.~\ref{eq:obj} aims to mitigate the performance drop on unseen test domains, which is caused by domain-specific features. The generalization error is thereby amply due to the impact of domain-specific features. In this section, we provide Theorem 1 to show that the generalization error can be effectively bounded by \Algnameabbr{}, referring to a better DG capability across the unseen domains.

\begin{theorem}[\textbf{Generalization Error Bound}] Let $\widetilde{\boldsymbol{x}}_k^{T}$ be a masked instance of $\boldsymbol{x}_k^T$ on an unseen domain $T$. Given an instance embedding $\boldsymbol{z}_k^T$ satisfies the composition of domain-specific $\boldsymbol{z}_k^{\text{T-sp}}$ and domain-sharing $\boldsymbol{z}_k^{\text{T-sh}}$, where $\hat{f}(\boldsymbol{x}_k^T) \!=\! \hat{c}(\boldsymbol{z}_k^T)$ be the predicted outcomes. For arbitrary distance function $d(\cdot,\cdot)$ in metric space, the generalization error $\textbf{GE} = \mathbf{E}_{\mathcal{X}}[ ~\textit{d}\big(f(\boldsymbol{x}_k^T), \hat{f}(\widetilde{\boldsymbol{x}}_k^{T}) \big)~ ]$ of \Algnameabbr{} framework can be bounded as:
\begin{equation} 
    \small
    \textbf{GE}
    \leq \mathbf{E}_{\mathcal{X}}[\textit{d}\big(c(\boldsymbol{z}_k^{\text{T-sh}}), \hat{c}(\widetilde{\boldsymbol{z}}_{k}^{\text{T-sh}}) \big)]
    + \mathbf{E}_{\mathcal{X}}[\textit{d}\big(c(\boldsymbol{z}_k^{\text{T-sp}}), \hat{c}(\widetilde{\boldsymbol{z}}_{k}^{\text{T-sp}}) \big)] 
\label{eq:thm_1}
\end{equation}
where $\widetilde{\boldsymbol{z}}_{k}^{\text{T}} \!=\! \boldsymbol{z}_k^{T} \odot m_k^T$ is composed of remained domain-specific embedding $\widetilde{\boldsymbol{z}}_{k}^{\text{T-sp}}$ and preserved domain-sharing embedding $\widetilde{\boldsymbol{z}}_{k}^{\text{T-sh}}$.
\label{thm:1} 
\end{theorem}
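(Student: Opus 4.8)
The plan is to begin from the definition $\textbf{GE} = \mathbf{E}_{\mathcal{X}}[d(f(\boldsymbol{x}_k^T), \hat{f}(\widetilde{\boldsymbol{x}}_k^{T}))]$ and rewrite both arguments through the frozen encoder--predictor split. Using the hypothesis $\hat{f}(\boldsymbol{x}_k^T) = \hat{c}(\boldsymbol{z}_k^T)$, the reference prediction is $f(\boldsymbol{x}_k^T) = c(\boldsymbol{z}_k^T)$ and the masked prediction is $\hat{f}(\widetilde{\boldsymbol{x}}_k^{T}) = \hat{c}(\widetilde{\boldsymbol{z}}_k^{T})$ with $\widetilde{\boldsymbol{z}}_k^{T} = \boldsymbol{z}_k^{T} \odot m_k^T$. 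First I would invoke the two assumed compositions, $\boldsymbol{z}_k^{T} = \boldsymbol{z}_k^{\text{T-sh}} + \boldsymbol{z}_k^{\text{T-sp}}$ and $\widetilde{\boldsymbol{z}}_k^{T} = \widetilde{\boldsymbol{z}}_k^{\text{T-sh}} + \widetilde{\boldsymbol{z}}_k^{\text{T-sp}}$, so that the integrand is expressed entirely in terms of the domain-shared and domain-specific groups.

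The core of the argument is a single application of the triangle inequality. Abbreviating $A = c(\boldsymbol{z}_k^{\text{T-sh}})$, $B = c(\boldsymbol{z}_k^{\text{T-sp}})$, $\widetilde{A} = \hat{c}(\widetilde{\boldsymbol{z}}_k^{\text{T-sh}})$, and $\widetilde{B} = \hat{c}(\widetilde{\boldsymbol{z}}_k^{\text{T-sp}})$, and assuming the predictor heads distribute over the composition so that $c(\boldsymbol{z}_k^{T}) = A + B$ and $\hat{c}(\widetilde{\boldsymbol{z}}_k^{T}) = \widetilde{A} + \widetilde{B}$, the per-instance term is $d(A + B, \widetilde{A} + \widetilde{B})$. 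Inserting the intermediate point $\widetilde{A} + B$ gives $d(A + B, \widetilde{A} + \widetilde{B}) \leq d(A + B, \widetilde{A} + B) + d(\widetilde{A} + B, \widetilde{A} + \widetilde{B})$, and translation invariance of the metric collapses these to $d(A, \widetilde{A}) + d(B, \widetilde{B})$. This reproduces exactly the two summands of Eq.~\ref{eq:thm_1} at the pointwise level; taking $\mathbf{E}_{\mathcal{X}}[\cdot]$ of both sides and using linearity and monotonicity of expectation then distributes the bound and preserves the inequality, completing the proof.

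The hard part will be justifying the two structural properties that the triangle-inequality step quietly relies on: that the predictors $c$ and $\hat{c}$ are \emph{additive} over the domain-shared/domain-specific decomposition, i.e. $c(\boldsymbol{z}^{\text{sh}} + \boldsymbol{z}^{\text{sp}}) = c(\boldsymbol{z}^{\text{sh}}) + c(\boldsymbol{z}^{\text{sp}})$, and that $d$ is \emph{translation invariant}, which holds for norm-induced metrics but not for a genuinely arbitrary distance. I would therefore either strengthen the statement to norm-induced $d$ (e.g. an $\ell_p$ distance on logits) or declare the linear/additive predictor head as a standing assumption before the inequality, since a linear classification layer on the embedding makes both properties hold simultaneously. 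A secondary point to pin down is the interaction of the Hadamard mask $\odot m_k^T$ with the additive split, verifying that masking the full embedding indeed yields $\widetilde{\boldsymbol{z}}_k^{\text{T-sh}} + \widetilde{\boldsymbol{z}}_k^{\text{T-sp}}$ rather than mixing the two groups; once the mask is shown to act coordinatewise within each group this is immediate.
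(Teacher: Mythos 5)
Your proof follows essentially the same route as the paper's: decompose the embedding as $\boldsymbol{z}_k^T=\boldsymbol{z}_k^{\text{T-sh}}+\boldsymbol{z}_k^{\text{T-sp}}$, use linearity of the predictor to split $c(\boldsymbol{z}_k^{\text{T-sh}}+\boldsymbol{z}_k^{\text{T-sp}})$ into $c(\boldsymbol{z}_k^{\text{T-sh}})+c(\boldsymbol{z}_k^{\text{T-sp}})$, apply the inequality $d(A+B,\widetilde{A}+\widetilde{B})\leq d(A,\widetilde{A})+d(B,\widetilde{B})$, and push the expectation through. Your flagged caveat is well taken: the paper asserts that inequality directly for an ``arbitrary distance function in metric space,'' whereas it genuinely requires translation invariance (or a norm-induced $d$), so your explicit insertion of the intermediate point $\widetilde{A}+B$ and the standing linear-predictor assumption make the argument tighter than the original without changing its substance.
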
 

Theorem~\ref{thm:1} shows that the upper bound of the GE depends on two terms, which are the predicted error of \textit{domain-shared features} and of \textit{domain-specific features}. The proof of Theorem 1 is provided in Appendix~\ref{appendix:proof_of_theorem1}. Based on Eq.~\ref{eq:thm_1}, \Algnameabbr{} contributes to minimizing the upper bound of GE as follows:
\begin{itemize}[leftmargin=*]
    \item The objective loss of EMG in Eq.~\ref{eq:obj} aligns with the goal of minimizing the first term of Theorem~\ref{thm:1}, as $\boldsymbol{m}^{T}_{k}$ aims to preserve the domain-shared features on $\boldsymbol{z}_{k}^{\text{T-sh}}$ for the prediction of multiple training domains.
    \item \Algnameabbr{} framework minimizes the value of second terms in Eq.~\ref{eq:thm_1} by making $\widetilde{\boldsymbol{z}}_{k}^{\text{T-sp}}$ approaches $\boldsymbol{z}_{k}^{\text{T-sp}}$ with a generated mask.
\end{itemize}

\subsection{Algorithm of EMG Training}
The training outline of the EMG is given in Algorithm~\ref{alg:train-emg}.
The training aims to achieve the base model of EMG $G(\cdot \mid \theta_G)$ that can generate domain-specific feature masks for each input data.
Specifically, EMG generates instance-specific embedding masks based on given training data (line 4), and then we let the frozen predictor $c(\cdot)$ predicts given masked embedding and original embedding (line 5-6). In each iteration, EMG is updated according to Eq.~\ref{eq:obj} (line 7) until it converges.

\begin{algorithm}
    \caption{Embedding Mask Generator (EMG) Training}
    \label{alg:train-emg}
    \small
    \begin{algorithmic}[1]
      \State {\bfseries Input:} 
      
      Training dataset $\boldsymbol{x} \in \mathcal{X}$
      
      Frozen feature encoder $g(\boldsymbol{x}) = \boldsymbol{z}$
      
      Frozen predictor $c(\cdot)$
      
      Base model of EMG $G(\cdot \mid \theta_G)$
      
      \State {\bfseries Output:} 
      
      Instance-specific mask generator EMG $G(\cdot \mid \theta_G)$
      \While{not convergence}
      \State Generate the embedding mask $\boldsymbol{m}$ by $G(\boldsymbol{x} \mid \theta_G)$ and Eq.~\ref{eq:mask}
      \State Predict by $c(\cdot)$ given masked embedding $\boldsymbol{\hat{y}_m} = c(\boldsymbol{m} \odot \boldsymbol{z})$
      \State Predict by $c(\cdot)$ given original embedding $\boldsymbol{\hat{y}} = c(\boldsymbol{z})$
      \State Update $G(\cdot \mid \theta_G)$ by minimizing the objective loss Eq.~\ref{eq:obj}
      \EndWhile
    \end{algorithmic}
\end{algorithm}



\section{Experiments}


In this section, we conduct experiments to evaluate the performance of \Algnameabbr{} framework, aiming to answer the following three research questions: 
\textbf{RQ1:} How effective is the proposed DISPEL when compared to state-of-the-art baselines?
\textbf{RQ2:} How does the fine-grained masking manner influence generalization performance?
\textbf{RQ3:} Can \Algnameabbr{} improve the generalization of other algorithms?


\begin{table}[tbh!]
    \Huge
    \renewcommand{\arraystretch}{1.0}
    \centering
    \caption{Average unseen domain results (ResNet50).}
    \makebox[0.45\textwidth][c]{
        \resizebox{0.48\textwidth}{!}{
            \begin{tabular}{l c c c c c c}
                \toprule
                 & PACS & Office-Home & VLCS & TerraInc & DomainNet & Avg. \\
                \midrule
                \multicolumn{7}{c}{\textbf{Group 1}: algorithms requiring domain labels} \\
                \midrule
                Mixup~\cite{wang2020heterogeneous} & 87.7 $\pm$ 0.5 & 71.2 $\pm$ 0.1 & 77.7 $\pm$ 0.4 & 48.9 $\pm$ 0.8 & 39.6 $\pm$ 0.1 & 65.1 \\
                MLDG~\cite{li2018learning} & 84.8 $\pm$ 0.6 & 68.2 $\pm$ 0.1 & 77.1 $\pm$ 0.4 & 46.1 $\pm$ 0.8 & 41.8 $\pm$ 0.4 & 63.6 \\
                CORAL~\cite{sun2016deep} & 86.2 $\pm$ 0.2 & 70.1 $\pm$ 0.4 & 77.7 $\pm$ 0.5 & 46.4 $\pm$ 0.8 & 41.8 $\pm$ 0.2 & 64.4 \\
                MMD~\cite{li2018domain} & 87.1 $\pm$ 0.2 & 70.4 $\pm$ 0.1 & 76.7 $\pm$ 0.9 & 49.3 $\pm$ 1.4 & 39.4 $\pm$ 0.8 & 64.6 \\
                DANN~\cite{ganin2016domain} & 86.7 $\pm$ 1.1 & 69.5 $\pm$ 0.6 & 78.7 $\pm$ 0.3 & 48.4 $\pm$ 0.5 & 38.4 $\pm$ 0.0 & 64.3 \\
                C-DANN~\cite{li2018deep} & 82.8 $\pm$ 1.5 & 65.6 $\pm$ 0.5 & 78.2 $\pm$ 0.4 & 47.6 $\pm$ 0.8 & 38.9 $\pm$ 0.1 & 62.6 \\
                DA-ERM~\cite{dubey2021adaptive} & 84.1 $\pm$ 0.5 & 67.9 $\pm$ 0.4 & 78.0 $\pm$ 0.2 & 47.3 $\pm$ 0.5 & 43.6 $\pm$ 0.3 & 64.2 \\
                \midrule
                \multicolumn{7}{c}{\textbf{Group 2}: algorithms without requiring domain labels} \\
                \midrule
                ERM~\cite{vapnik1999overview} & 86.4 $\pm$ 0.1 & 69.9 $\pm$ 0.1 & 77.4 $\pm$ 0.3 & 47.2 $\pm$ 0.4 & 41.2 $\pm$ 0.2 & 64.5 \\
                IRM~\cite{arjovsky2019invariant} & 84.4 $\pm$ 1.1 & 66.6 $\pm$ 1.0 & 78.1 $\pm$ 0.0 & 47.9 $\pm$ 0.7 & 35.7 $\pm$ 1.9 & 62.5 \\
                DRO~\cite{sagawa2019distributionally} & 86.8 $\pm$ 0.4 & 70.2 $\pm$ 0.3 & 77.2 $\pm$ 0.6 & 47.0 $\pm$ 0.3 & 33.7 $\pm$ 0.2 & 63.0 \\
                RSC~\cite{huang2020self} & 86.9 $\pm$ 0.2 & 69.4 $\pm$ 0.4 & 75.3 $\pm$ 0.5 & 45.7 $\pm$ 0.3 & 41.2 $\pm$ 1.0 & 63.7 \\
                MIRO~\cite{cha2022domain} & 85.4 $\pm$ 0.4 & 70.5 $\pm$ 0.4 & 79.0 $\pm$ 0.0 & \textbf{50.4} $\pm$ 1.1 & \textbf{44.3} $\pm$ 0.2 & 65.9 \\
                \midrule
                \Algnameabbr{} & \textbf{88.2} $\pm$ 0.1 & \textbf{73.3} $\pm$ 0.3 & \textbf{79.3} $\pm$ 0.1 & \textbf{50.4} $\pm$ 0.2 & \textbf{44.1} $\pm$ 0.0 & \textbf{67.1} \\
                \bottomrule
            \end{tabular}
        }
    }
    \label{tab:tb1}
\end{table}

\subsection{Experimental Settings} \label{sec:data}

\noindent\textbf{Datasets.}
To compare the efficacy of our proposed framework with existing algorithms, we conduct our experiments on five real-world benchmark datasets: PACS~\cite{li2017deeper} with 7 classes of images in 4 domains, Office-Home~\cite{venkateswara2017deep} with 65 classes of images in 4 domains, VLCS~\cite{fang2013unbiased} with 5 classes of images in 4 domains, Terra Incognita~\cite{beery2018recognition} with 10 classes of images in 4 domains, and DomainNet~\cite{peng2019moment} with 345 classes of images in 6 domains. DomainNet can be considered a larger-scale dataset with a more difficult multi-classification task than the other 4 benchmark datasets. More details about the datasets can be found in Appendix~\ref{appendix:datasets}.

\begin{figure*}[tbh!]
    \centering
    \subfigure[Art Painting]{
    \centering
    \begin{minipage}[t]{0.23\linewidth}
	    \includegraphics[width=0.99\linewidth]{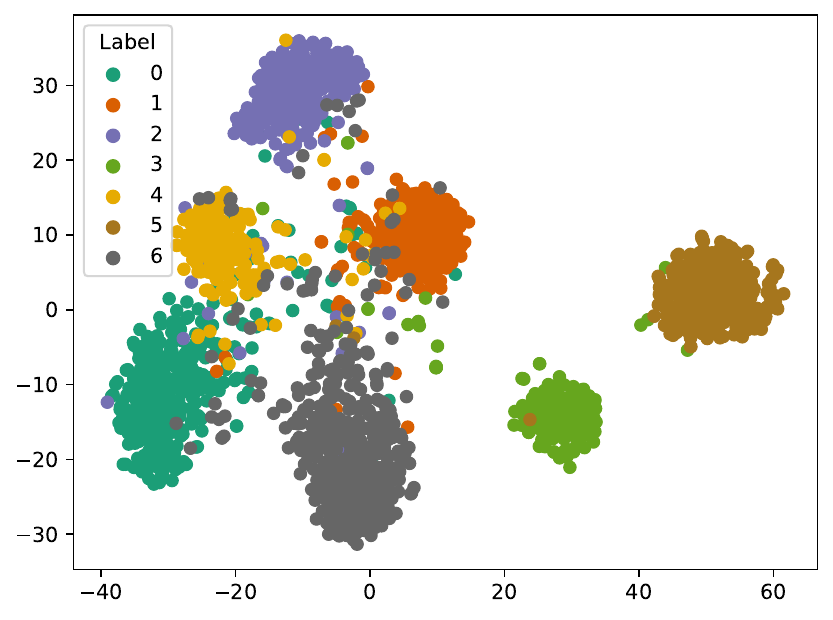}
    \end{minipage}%
    }
    \subfigure[Cartoon]{
    \centering
    \begin{minipage}[t]{0.23\linewidth}
	    \includegraphics[width=0.99\linewidth]{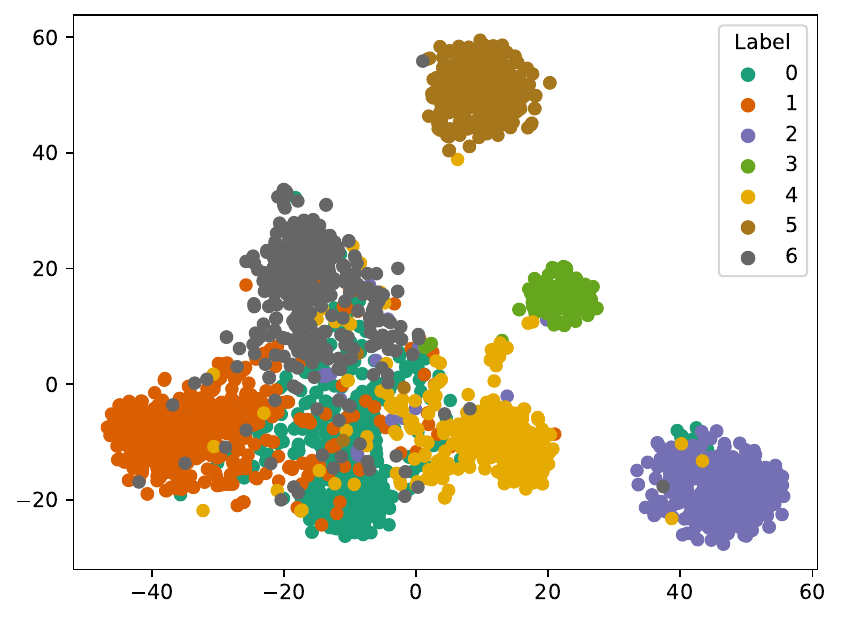}
    \end{minipage}%
    }
    \subfigure[Photo]{
    \centering
    \begin{minipage}[t]{0.23\linewidth}
	    \includegraphics[width=0.99\linewidth]{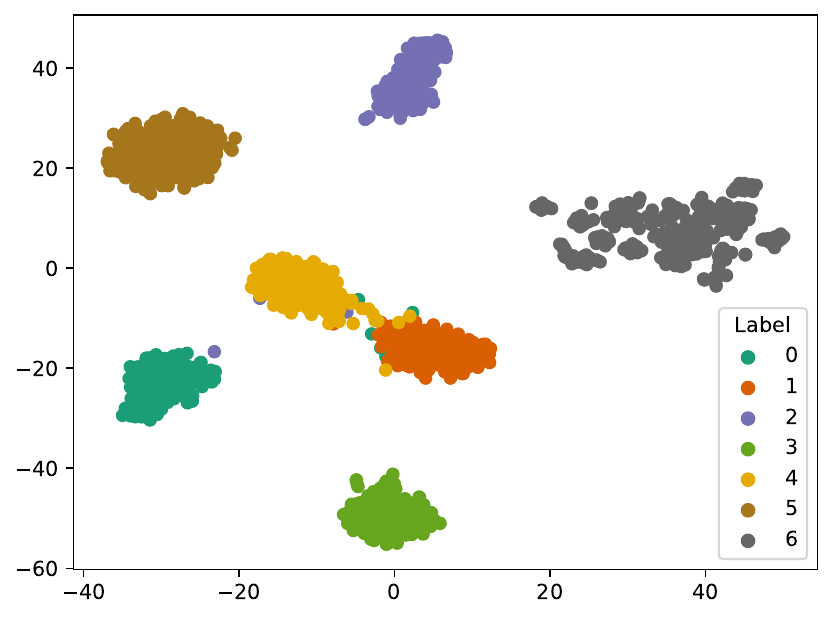}
    \end{minipage}%
    }
    \subfigure[Sketch]{
    \centering
    \begin{minipage}[t]{0.23\linewidth}
	    \includegraphics[width=0.99\linewidth]{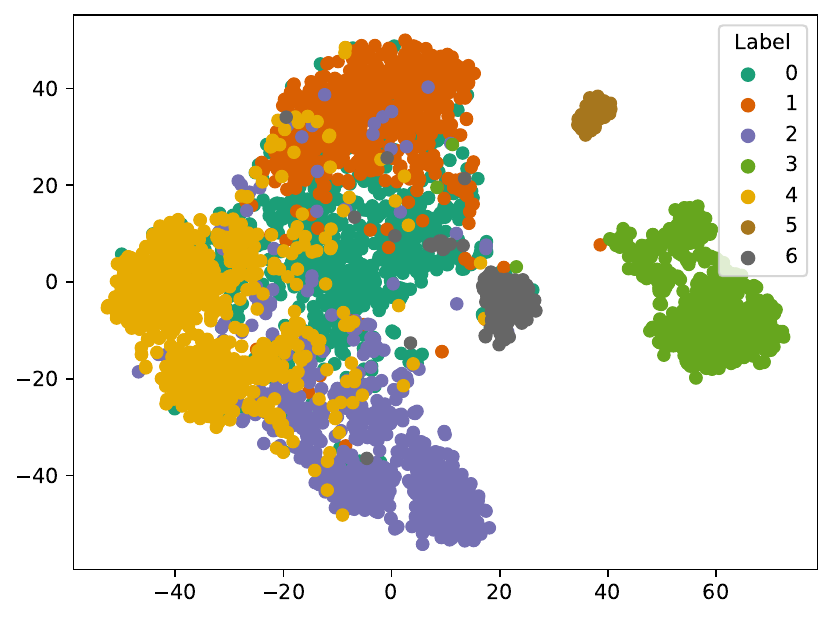}
    \end{minipage}%
    }
    \vspace{-4mm}
    \caption{t-SNE visualization of ERM embedding in four unseen test domains of PACS.}
    \label{fig:erm_embed_pacs}
    \vspace{-3mm}
\end{figure*}

\begin{figure*}[tbh!]
\setlength{\abovecaptionskip}{0mm}
\setlength{\belowcaptionskip}{-5mm}
    \centering
    \subfigure[Art Painting]{
    \centering
    \begin{minipage}[t]{0.23\linewidth}
	\includegraphics[width=1.0\linewidth]{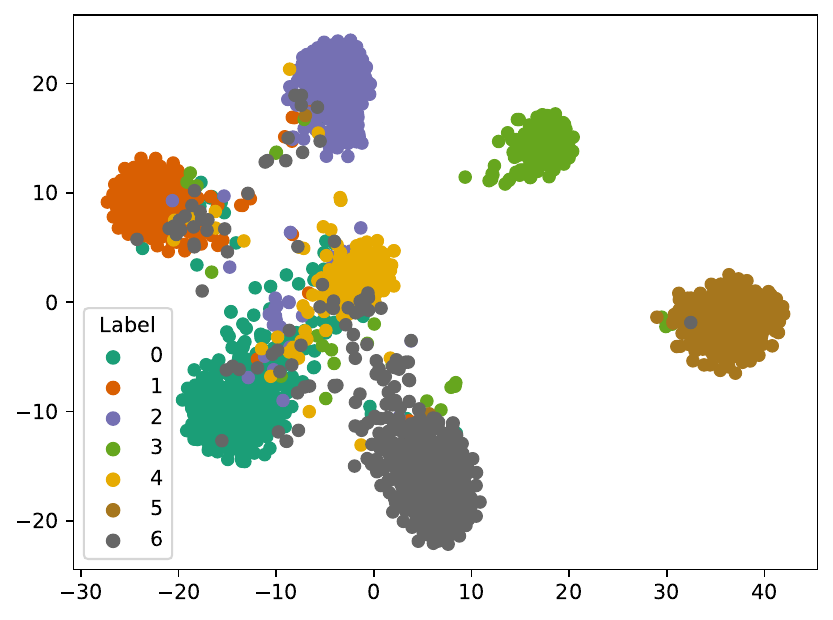}
    \end{minipage}%
    }
    \subfigure[Cartoon]{
    \centering
    \begin{minipage}[t]{0.23\linewidth}
	\includegraphics[width=1.0\linewidth]{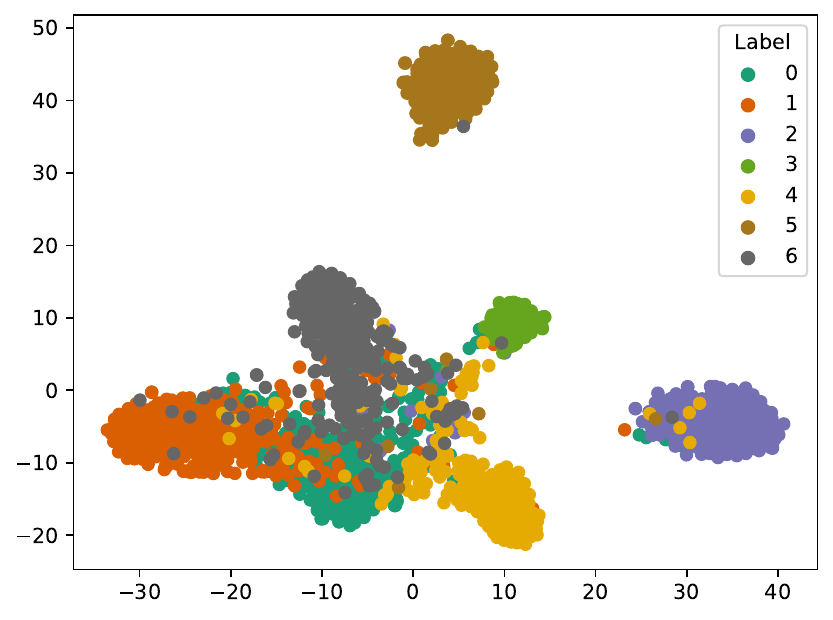}
    \end{minipage}%
    }
    \subfigure[Photo]{
    \centering
    \begin{minipage}[t]{0.23\linewidth}
	\includegraphics[width=1.0\linewidth]{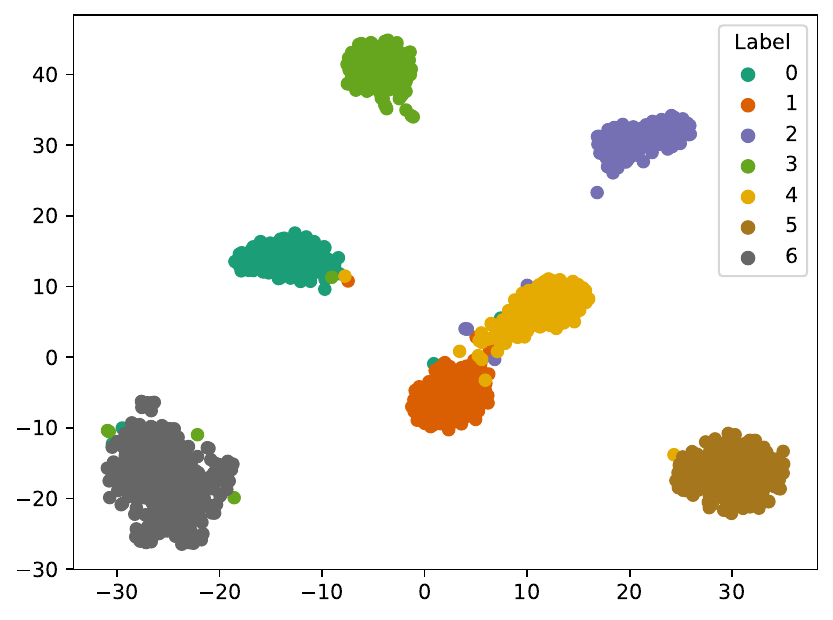}
    \end{minipage}%
    }
    \subfigure[Sketch]{
    \centering
    \begin{minipage}[t]{0.23\linewidth}
	\includegraphics[width=1.0\linewidth]{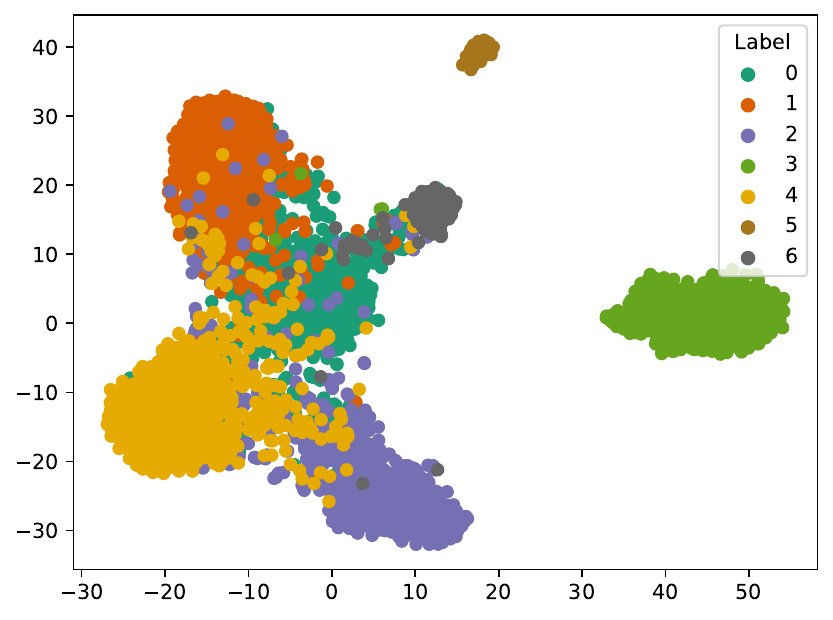}
    \end{minipage}%
    }
    \caption{t-SNE visualization of \Algnameabbr{} embedding in four unseen test domains of PACS.}
    \vspace{1mm}
    \label{fig:dispel_embed_pacs}
\end{figure*}


\noindent\textbf{Baselines.}
To fairly compare our proposed framework with existing algorithms, we follow the settings of DomainBed~\cite{gulrajani2020search} and DeepDG~\cite{wang2022generalizing}, using the best result between DomainBed, DeepDG, and the original literature.
We categorize the 12 baseline algorithms into two groups: 
$\textbf{Group 1}$: the algorithms requiring domain labels (Mixup~\cite{wang2020heterogeneous}, MLDG~\cite{li2018learning}, CORAL~\cite{sun2016deep}, MMD~\cite{li2018domain}, DANN~\cite{li2018deep}, C-DANN~\cite{li2018deep}, and DA-ERM~\cite{dubey2021adaptive}); and 
$\textbf{Group 2}$: the algorithms without requiring domain labels (ERM~\cite{vapnik1999overview}, IRM~\cite{arjovsky2019invariant}, DRO~\cite{sagawa2019distributionally}, RSC~\cite{huang2020self}, and MIRO~\cite{cha2022domain}).
More baselines details can be found in Appendix~\ref{appendix:baseline_and_implement}.

\noindent\textbf{Implementation Details.}
All the experimental results of the proposed \Algnameabbr{} are implemented and performed based on the codebase of DeepDG~\cite{wang2022generalizing}.
Regarding the setting of model selection, we use traditional \textit{training-domain validation set} for our implementation.
Since larger ResNets are known to have better generalization ability, we mainly conduct experiments with ResNet50 models for all 5 benchmark datasets, and also show the results of \Algnameabbr{} based on ResNet18 as a reference in Tab.~\ref{tab:tb2}.
For all the experimental results of \Algnameabbr{}, we employ ERM to fine-tune the pre-trained ResNet18 and ResNet50 as the fine-tuned model mentioned in Sec.~\ref{sec:dispel_frame}.
As for the EMG module in \Algnameabbr{}, we employ a pre-trained ResNet50 as the base model and set the temperature hyper-parameter $\tau$ in Eq.~\ref{eq:mask} to 0.1 for all the \Algnameabbr{} derivative models.
More implementation details of \Algnameabbr{} can be found in Appendix~\ref{appendix:baseline_and_implement}.

\subsection{Generalization Efficacy Analysis (RQ1)}


To evaluate the generalization performance of the proposed \Algnameabbr{} framework, we compare the results with 12 baseline methods on five datasets.
We summarize the results comparing \Algnameabbr{} to other baselines based on a ResNet50 pre-trained model in Tab.~\ref{tab:tb1}.
Overall, \Algnameabbr{} achieves state-of-the-art without using domain labels.
\Algnameabbr{} achieves the best average accuracy on unseen test domains in 4 out of 5 benchmark datasets.
On average, \Algnameabbr{} shows the best accuracy on unseen test domains over 5 benchmarks, meaning that it has stable effectiveness across different data distributions.
Considering the extensive experimentation conducted on 5 benchmark datasets and 22 unseen test domains, the results conclusively demonstrate the efficacy of \Algnameabbr{} in improving the diverse types of image data.

\begin{table}[tbh!]
    \small
    \renewcommand{\arraystretch}{0.7}
    \centering
    \caption{Average unseen test domain accuracy improvement for ERM (ResNet18).}
    \makebox[0.45\textwidth][c]{
        \resizebox{0.48\textwidth}{!}{
            \begin{tabular}{l c c c}
                \toprule
                 & PACS & Office-Home & Avg. \\
                \midrule
                \multicolumn{4}{c}{\textbf{Group 1}: algorithms requiring domain labels} \\
                \midrule
                Mixup~\cite{wang2020heterogeneous} & 82.3 $\pm$ 0.4 & 64.3 $\pm$ 0.2 & 73.3 \\
                CORAL~\cite{sun2016deep} & 82.8 $\pm$ 0.1 & 64.0 $\pm$ 0.3 & 73.4 \\
                MMD~\cite{li2018domain} & 83.2 $\pm$ 0.2 & 64.2 $\pm$ 0.1 & 73.7 \\
                DANN~\cite{ganin2016domain} & 83.6 $\pm$ 0.8 & 62.6 $\pm$ 0.5 & 73.1 \\
                \midrule
                \multicolumn{4}{c}{\textbf{Group 2}: algorithms without requiring domain labels} \\
                \midrule
                ERM~\cite{vapnik1999overview} & 82.1 $\pm$ 0.1 & 63.0 $\pm$ 0.1 & 72.6 \\
                DRO~\cite{sagawa2019distributionally} & 82.2 $\pm$ 0.2 & 63.9 $\pm$ 0.2 & 73.1 \\
                RSC~\cite{huang2020self} & 83.6 $\pm$ 0.2 & 63.4 $\pm$ 0.3 & 73.5 \\
                \midrule
                \Algnameabbr{} & \textbf{85.4} $\pm$ 0.1 & \textbf{67.2} $\pm$ 0.0 & \textbf{76.3} \\
                \bottomrule
            \end{tabular}
        }
    }
    \label{tab:tb2}
\end{table}

\begin{table}[tbh!]
    \renewcommand{\arraystretch}{0.9}
    \centering
    \caption{Each unseen test domain accuracy comparisons of PACS (ResNet50).}
    \makebox[0.45\textwidth][c]{
        \resizebox{0.48\textwidth}{!}{
            \begin{tabular}{l c c c c}
                \toprule
                 & Art Painting & Cartoon & Photo & Sketch \\
                \midrule
                \multicolumn{5}{c}{\textbf{Group 1}: algorithms requiring domain labels} \\
                \midrule
                Mixup~\cite{wang2020heterogeneous} & \textbf{89.3} $\pm$ 0.5 & 81.7 $\pm$ 0.1 & 97.3 $\pm$ 0.4 & 82.3 $\pm$ 0.8 \\
                MLDG~\cite{li2018learning} & 89.1 $\pm$ 0.9 & 78.8 $\pm$ 0.7 & 97.0 $\pm$ 0.9 & 74.4 $\pm$ 2.0 \\
                CORAL~\cite{sun2016deep} & 84.7 $\pm$ 0.6 & 81.5 $\pm$ 1.1 & 96.7 $\pm$ 0.0 & 81.7 $\pm$ 0.1 \\
                MMD~\cite{li2018domain} & 84.5 $\pm$ 0.6 & 79.7 $\pm$ 0.7 & 97.5 $\pm$ 0.4 & 78.1 $\pm$ 1.3 \\
                DANN~\cite{ganin2016domain} & 87.1 $\pm$ 0.5 & \textbf{83.2} $\pm$ 1.4 & 96.5 $\pm$ 0.2 & 79.8 $\pm$ 2.8 \\
                C-DANN~\cite{li2018deep} & 84.0 $\pm$ 0.9 & 78.5 $\pm$ 1.5 & 97.0 $\pm$ 0.4 & 71.8 $\pm$ 3.9 \\
                \midrule
                \multicolumn{5}{c}{\textbf{Group 2}: algorithms without requiring domain labels} \\
                \midrule
                ERM~\cite{vapnik1999overview} & 83.7 $\pm$ 0.1 & 81.8 $\pm$ 1.3 & 96.7 $\pm$ 0.0 & 83.4 $\pm$ 0.9 \\
                IRM~\cite{arjovsky2019invariant} & 85.0 $\pm$ 1.6 & 77.6 $\pm$ 0.9 & 96.7 $\pm$ 0.3 & 78.5 $\pm$ 2.6 \\
                DRO~\cite{sagawa2019distributionally} & 85.0 $\pm$ 0.3 & 81.8 $\pm$ 0.8 & 96.1 $\pm$ 0.3 & 84.3 $\pm$ 0.7 \\
                RSC~\cite{huang2020self} & 87.8 $\pm$ 0.8 & 80.3 $\pm$ 1.8 & 97.7 $\pm$ 0.3 & 81.5 $\pm$ 1.2 \\
                \midrule
                \Algnameabbr{} & 87.1 $\pm$ 0.1 & 82.5 $\pm$ 0.0 & \textbf{98.0} $\pm$ 0.1 & \textbf{85.2} $\pm$ 0.1 \\
                \bottomrule
            \end{tabular}
        }
    }
    \label{tab:tb3}
\end{table}

To evaluate the efficacy of \Algnameabbr{} for improving domain generalization performance based on a less generalization architecture, we also conduct the experiments with a ResNet18.
Although it cannot improve ResNet18-based ERM to achieve comparable performance with other ResNet50-based baselines, the results in Tab.~\ref{tab:tb2} show an impressive potential of \Algnameabbr{} for boosting the generalization ability of small pre-trained architectures.
The full experimantal results of \Algnameabbr{} can be found in Appendix~\ref{appendix:exp_results}.

\noindent\textbf{Observation 1: \Algnameabbr{} can be adopted to different neural architectures.}
Based on the results shown in Tab.~\ref{tab:tb1} and Tab.~\ref{tab:tb2}, \Algnameabbr{} maintains the generalizing efficacy with different architectures.
Even being utilized in a small pre-trained model which is known to have worse generalization ability, \Algnameabbr{} can improve the prediction accuracy in unseen domains.

\begin{figure*}[tbh!]
    \centering
    \subfigure[Location 100]{
    \centering
    \begin{minipage}[t]{0.23\linewidth}
	    \includegraphics[width=1.0\linewidth]{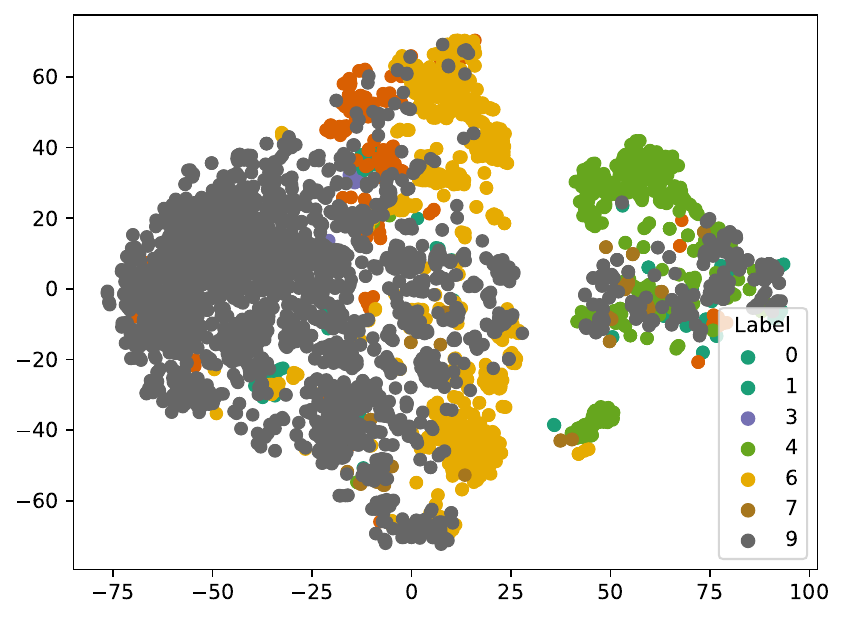}
    \end{minipage}%
    }
    \subfigure[Location 38]{
    \centering
    \begin{minipage}[t]{0.23\linewidth}
	    \includegraphics[width=0.99\linewidth]{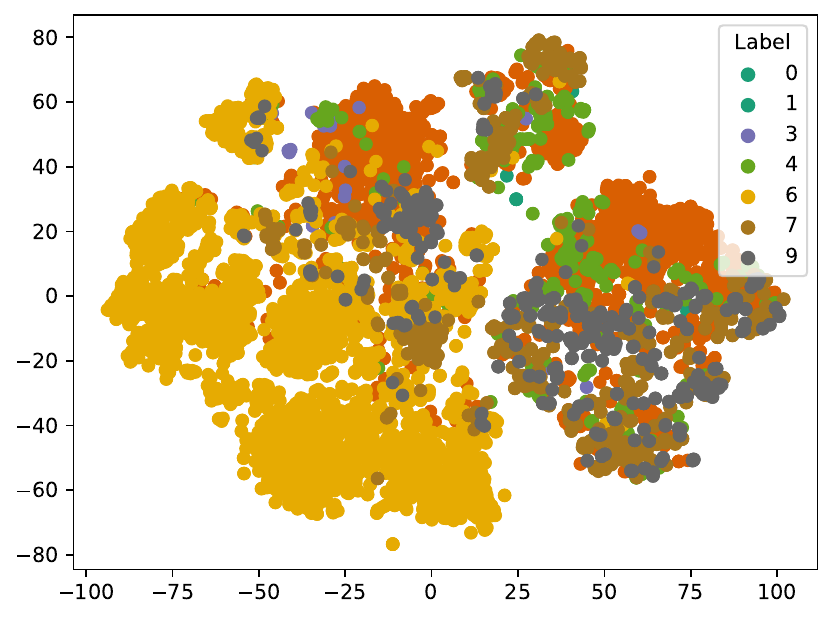}
    \end{minipage}%
    }
    \subfigure[Location 43]{
    \centering
    \begin{minipage}[t]{0.23\linewidth}
	    \includegraphics[width=0.99\linewidth]{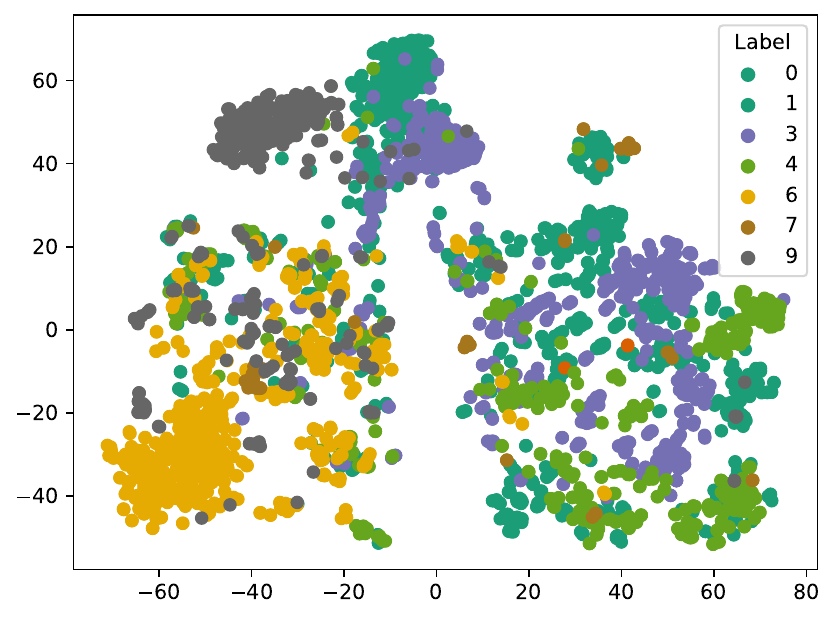}
    \end{minipage}%
    }
    \subfigure[Location 46]{
    \centering
    \begin{minipage}[t]{0.23\linewidth}
	    \includegraphics[width=0.99\linewidth]{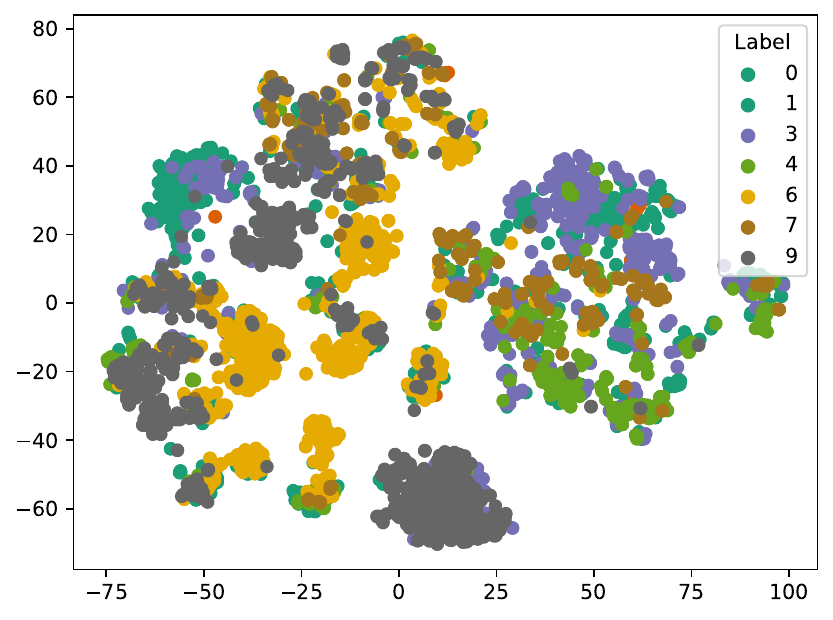}
    \end{minipage}%
    }
    \vspace{-4mm}
    \caption{t-SNE visualization of ERM embedding in four unseen test domains of Terra Incognita.}
    \label{fig:erm_embed_terra}
    \vspace{-3mm}
\end{figure*}

\begin{figure*}[tbh!]
\setlength{\abovecaptionskip}{0mm}
\setlength{\belowcaptionskip}{-5mm}
    \centering
    \subfigure[Location 100]{
    \centering
    \begin{minipage}[t]{0.23\linewidth}
	\includegraphics[width=1.0\linewidth]{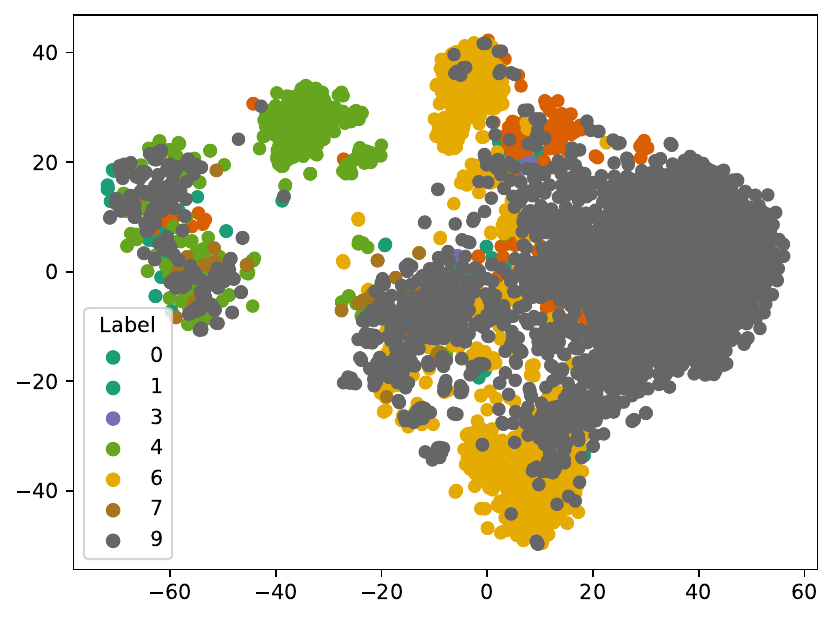}
    \end{minipage}%
    }
    \subfigure[Location 38]{
    \centering
    \begin{minipage}[t]{0.23\linewidth}
	\includegraphics[width=1.0\linewidth]{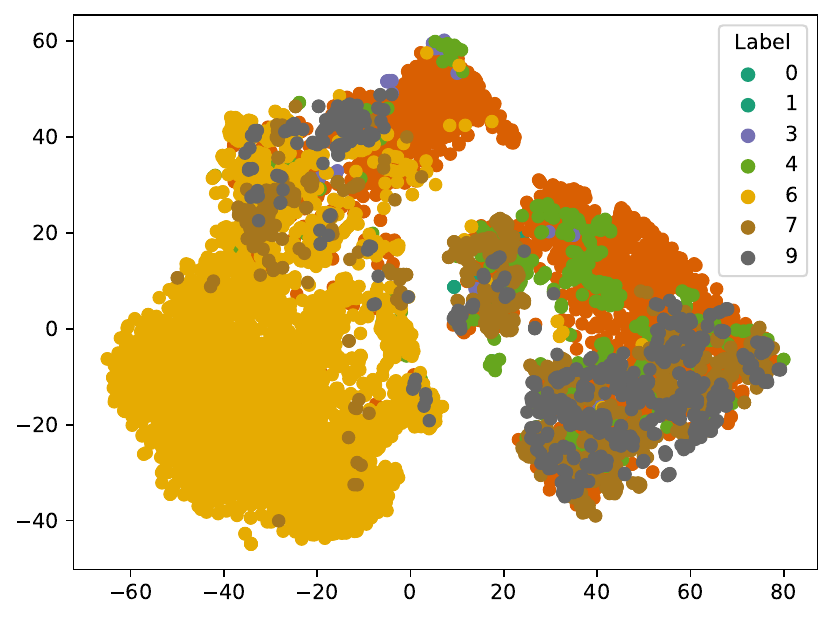}
    \end{minipage}%
    }
    \subfigure[Location 43]{
    \centering
    \begin{minipage}[t]{0.23\linewidth}
	\includegraphics[width=1.0\linewidth]{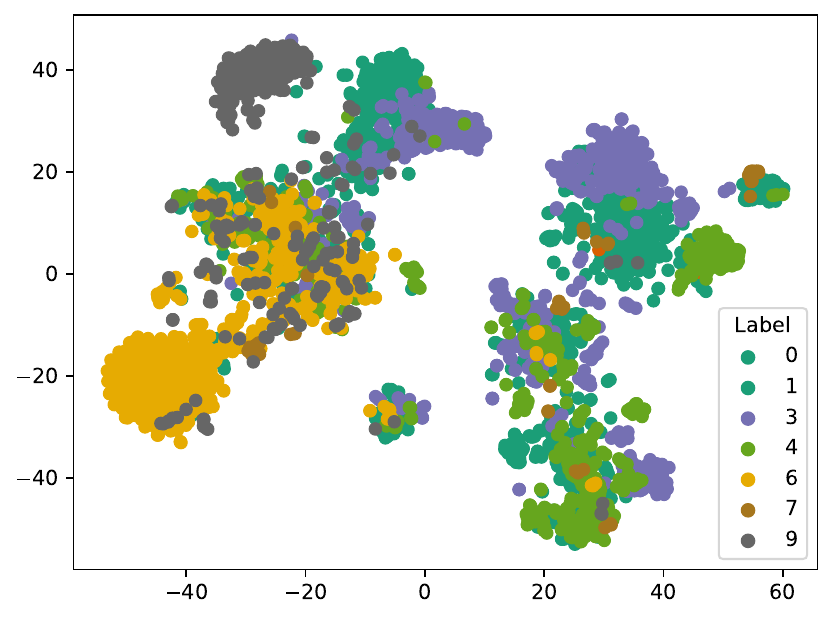}
    \end{minipage}%
    }
    \subfigure[Location 46]{
    \centering
    \begin{minipage}[t]{0.23\linewidth}
	\includegraphics[width=1.0\linewidth]{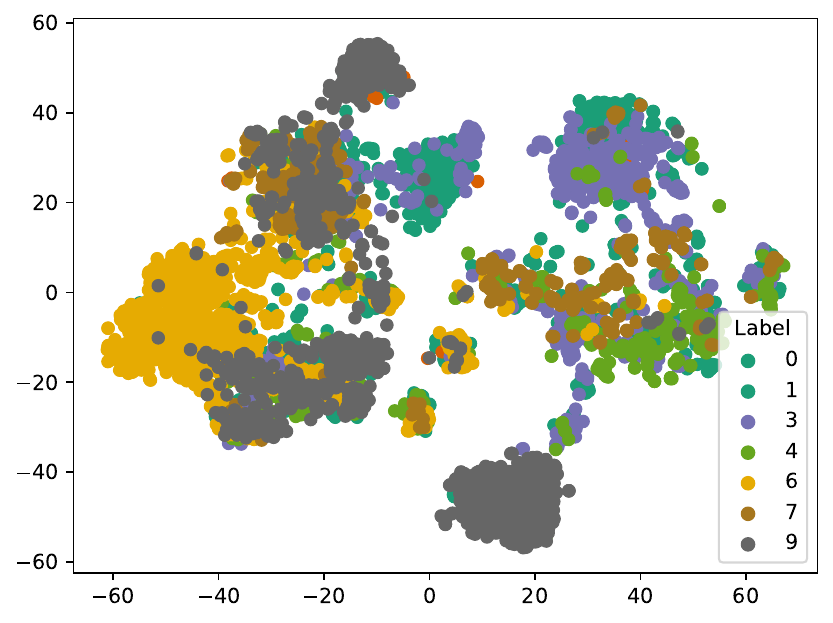}
    \end{minipage}%
    }
    \caption{t-SNE visualization of \Algnameabbr{} embedding in four unseen test domains of Terra Incognita.}
    \vspace{1mm}
    \label{fig:dispel_embed_terra}
\end{figure*}

\subsection{Manners of Domain-Specific Liberating (RQ2)}
\label{sec:rq2}


As mentioned in Sec.~\ref{sec:dispel}, \Algnameabbr{} aims to enhance the generalizability of fine-tuned models in a fine-grained instance-specific manner, which considers the unique characteristic of each instance to mask embedding.
Hence, in addition to the encouraging experimental outcomes demonstrating the generalizing efficacy of the proposed \Algnameabbr{}, we seek to delve into the detailed mechanisms underlying \Algnameabbr{}'s performance.

\begin{table}[t!]
    \Large
    \renewcommand{\arraystretch}{0.99}
    \centering
    \caption{Each unseen test domain accuracy comparisons of Terra Incognita (ResNet50).}
    \makebox[0.45\textwidth][c]{
        \resizebox{0.48\textwidth}{!}{
            \begin{tabular}{l c c c c}
                \toprule
                 & Location 100 & Location 38 & Location 43 & Location 46 \\
                \midrule
                \multicolumn{5}{c}{\textbf{Group 1}: algorithms requiring domain labels} \\
                \midrule
                Mixup~\cite{wang2020heterogeneous} & \textbf{60.6} $\pm$ 1.3 & 41.1 $\pm$ 1.8 & 58.5 $\pm$ 0.8 & 35.2 $\pm$ 1.1 \\
                MLDG~\cite{li2018learning} & 48.5 $\pm$ 3.3 & 42.8 $\pm$ 0.4 & 56.8 $\pm$ 0.9 & 36.3 $\pm$ 0.5 \\
                CORAL~\cite{sun2016deep} & 48.6 $\pm$ 0.9 & 42.2 $\pm$ 3.5 & 55.9 $\pm$ 0.6 & 38.7 $\pm$ 0.7 \\
                MMD~\cite{li2018domain} & 52.2 $\pm$ 5.8 & 47.0 $\pm$ 0.6 & \textbf{57.8} $\pm$ 1.3 & 40.3 $\pm$ 0.5 \\
                DANN~\cite{ganin2016domain} & 49.0 $\pm$ 3.8 & 46.3 $\pm$ 1.7 & 57.6 $\pm$ 0.8 & 40.6 $\pm$ 1.7 \\
                C-DANN~\cite{li2018deep} & 49.5 $\pm$ 3.8 & 44.8 $\pm$ 1.0 & 57.3 $\pm$ 1.1 & 38.8 $\pm$ 1.7 \\
                \midrule
                \multicolumn{5}{c}{\textbf{Group 2}: algorithms without requiring domain labels} \\
                \midrule
                ERM~\cite{vapnik1999overview} & 50.8 $\pm$ 0.2 & 42.5 $\pm$ 0.2 & \textbf{57.9} $\pm$ 1.3 & 37.6 $\pm$ 1.3 \\
                IRM~\cite{arjovsky2019invariant} & 44.2 $\pm$ 2.7 & 41.3 $\pm$ 0.6 & 54.3 $\pm$ 0.2 & 36.0 $\pm$ 1.7 \\
                DRO~\cite{sagawa2019distributionally} & 31.8 $\pm$ 0.3 & 43.7 $\pm$ 1.2 & 58.0 $\pm$ 0.7 & 36.6 $\pm$ 1.3 \\
                RSC~\cite{huang2020self} & 50.2 $\pm$ 2.2 & 39.2 $\pm$ 1.4 & 56.3 $\pm$ 1.4 & 40.8 $\pm$ 0.6 \\
                \midrule
                \Algnameabbr{} & 54.7 $\pm$ 0.3 & \textbf{48.1} $\pm$ 0.0 & 56.3 $\pm$ 0.3 & \textbf{42.3} $\pm$ 0.2 \\
                \bottomrule
            \end{tabular}
        }
    }
    \vspace{-0.3cm}
    \label{tab:tb4}
\end{table}

\subsubsection{OOD Performance in Each Unseen Test Domain.}
\label{sec:each_domain}

To closely investigate the manner of \Algnameabbr{}, we first observe the prediction accuracy in each unseen test domain.
\Algnameabbr{} can improve the accuracy of the base algorithm (here we use ERM) on 21 out of 22 unseen test domains, except the domain Location 43 of the Terra Incognita dataset.

Tab.~\ref{tab:tb3} shows the accuracy of each unseen test domain of PACS.
Despite only achieving the best performance in two unseen test domains, \Algnameabbr{} can increase the accuracy in all four domains compared to the base algorithm ERM. 
This phenomenon reflects the instance-specific manner of \Algnameabbr{} due to the ability of masking data representations according to each input image, which is a more fine-grained perspective than identifying different domain groups for generalization.

Regarding the Terra Incognita, the results show that \Algnameabbr{} did not enhance accuracy of ERM in the \textit{Location 43} domain, the only one out of 22 domains tested.
However, it still has the best performance on average of the four unseen test domains of Terra Incognita.
This result again mirrors that the fine-grained instance-specific embedding masking of \Algnameabbr{} brings a more stable generalizing efficacy among different out-of-distribution sets.
Considering the DoainNet as a larger-scale benchmark with a more difficult multi-classification task, \Algnameabbr{} achieves the best accuracy in five out of six unseen test domains and improve the accuracy in all the unseen domains compared to ERM, as shown in Tab.~\ref{tab:tb5}.

\noindent\textbf{Observation 2: \Algnameabbr{} possesses stable generalizing efficacy.}
The results show that \Algnameabbr{} maintains its stable efficacy in improving generalization ability over more different data distributions in more diverse classes of data. 
And these results reflect the purpose of the EMG module that considers each instance for fine-grained domain-specific feature masking.

\begin{table}[tbh!]
    \Huge
    \renewcommand{\arraystretch}{1.0}
    \centering
    \caption{Each unseen test domain accuracy comparisons of DomainNet (ResNet50).}
    \makebox[0.45\textwidth][c]{
        \resizebox{0.48\textwidth}{!}{
            \begin{tabular}{l c c c c c c}
                \toprule
                 & Clipart & Infograph & Painting & Quickdraw & Real & Sketch \\
                \midrule
                \multicolumn{7}{c}{\textbf{Group 1}: algorithms requiring domain labels} \\
                \midrule
                Mixup~\cite{wang2020heterogeneous} & 55.3 $\pm$ 0.3 & 18.2 $\pm$ 0.3 & 45.0 $\pm$ 1.0 & 12.5 $\pm$ 0.3 & 57.1 $\pm$ 1.2 & 49.2 $\pm$ 0.3 \\
                MLDG~\cite{li2018learning} & 59.5 $\pm$ 0.0 & 19.8 $\pm$ 0.4 & \textbf{48.3} $\pm$ 0.5 & 13.0 $\pm$ 0.4 & 59.5 $\pm$ 1.0 & 50.4 $\pm$ 0.7 \\
                CORAL~\cite{sun2016deep} & 58.7 $\pm$ 0.2 & \textbf{20.9} $\pm$ 0.3 & 47.3 $\pm$ 0.3 & 13.6 $\pm$ 0.3 & 60.2 $\pm$ 0.3 & 50.2 $\pm$ 0.6 \\
                MMD~\cite{li2018domain} & 54.6 $\pm$ 1.7 & 19.3 $\pm$ 0.3 & 44.9 $\pm$ 1.1 & 11.4 $\pm$ 0.5 & 59.5 $\pm$ 0.2 & 47.0 $\pm$ 1.6 \\
                DANN~\cite{ganin2016domain} & 53.8 $\pm$ 0.7 & 17.8 $\pm$ 0.3 & 43.5 $\pm$ 0.3 & 11.9 $\pm$ 0.5 & 56.4 $\pm$ 0.3 & 46.7 $\pm$ 0.5 \\
                C-DANN~\cite{li2018deep} & 53.4 $\pm$ 0.4 & 18.3 $\pm$ 0.7 & 44.8 $\pm$ 0.3 & 12.9 $\pm$ 0.2 & 57.5 $\pm$ 0.4 & 46.7 $\pm$ 0.2 \\
                \midrule
                \multicolumn{7}{c}{\textbf{Group 2}: algorithms without requiring domain labels} \\
                \midrule
                ERM~\cite{vapnik1999overview} & 58.4 $\pm$ 0.3 & 19.2 $\pm$ 0.4 & 46.3 $\pm$ 0.5 & 12.8 $\pm$ 0.0 & 60.6 $\pm$ 0.5 & 49.7 $\pm$ 0.8 \\
                IRM~\cite{arjovsky2019invariant} & 51.0 $\pm$ 3.3 & 16.8 $\pm$ 1.0 & 38.8 $\pm$ 2.1 & 11.8 $\pm$ 0.5 & 51.5 $\pm$ 3.6 & 44.2 $\pm$ 3.1 \\
                DRO~\cite{sagawa2019distributionally} & 47.8 $\pm$ 0.6 & 17.1 $\pm$ 0.6 & 36.6 $\pm$ 0.7 & 8.8 $\pm$ 0.4 & 51.5 $\pm$ 0.6 & 40.7 $\pm$ 0.3 \\
                RSC~\cite{huang2020self} & 55.0 $\pm$ 1.2 & 18.3 $\pm$ 0.5 & 44.4 $\pm$ 0.6 & 12.2 $\pm$ 0.6 & 55.7 $\pm$ 0.7 & 47.8 $\pm$ 0.9 \\
                \midrule
                \Algnameabbr{} & \textbf{63.4} $\pm$ 0.1 & 20.1 $\pm$ 0.0 & \textbf{48.2} $\pm$ 0.0 & \textbf{14.2} $\pm$ 0.0 & \textbf{63.4} $\pm$ 0.0 & \textbf{54.9} $\pm$ 0.0 \\
                \bottomrule
            \end{tabular}
        }
    }
    \label{tab:tb5}
\end{table}

\subsubsection{Visualization Analysis via t-SNE}

To illustrate how \Algnameabbr{} improves generalization by blocking domain-specific features in the embedding space, we use t-SNE in Fig.~\ref{fig:erm_embed_pacs} to ~\ref{fig:dispel_embed_terra} in the unseen test domains of PACS and Terra Incognita by comparing the embedding with and without \Algnameabbr{}.

Comparing Fig.~\ref{fig:erm_embed_pacs}-(a) and Fig.~\ref{fig:dispel_embed_pacs}-(a), the key observation is that \Algnameabbr{} aims to make each class more concentrated and separate them better.
By drawing down more precise decision boundaries, the predictor can achieve better accuracy in the unseen \textit{Art Painting} domain, in which \Algnameabbr{} enhances the most accuracy among the 4 domains as shown in Tab.~\ref{tab:tb3}.
Even when the boundaries between classes are blurred and hard to be separated, \Algnameabbr{} keeps its impact by making them more concentrated inside each class.
For instance, the results on \textit{Sketch} domain, as shown in Fig.~\ref{fig:erm_embed_pacs}-(d) and Fig.~\ref{fig:dispel_embed_pacs}-(d), show that \Algnameabbr{} concentrates the representation of each class, decreasing the length of the boundaries between class 0 and 1, 2, and 4.
By decreasing the length of boundaries between different groups, the vague part for classification will be less than the original distribution, which means that the correctness of the predictions in the unseen test domains will increase.

Investigating the embedding of Terra Incognita, a more difficult multi-class classification task dataset, we observe the coherent behavior of \Algnameabbr{} to its manner in PACS.
As shown in Fig.~\ref{fig:erm_embed_terra}-(a)(b)(d) and Fig.~\ref{fig:dispel_embed_terra}-(a)(b)(d), \Algnameabbr{} has the same effect as on \textit{Cartoon} and \textit{Sketch} domain of PACS, which is to reduce the length of decision boundaries between different classes by concentrating distribution of each class.
In addition, as shown in Tab.~\ref{tab:tb4}, \textit{Location 43} is the only domain in which \Algnameabbr{} cannot improve its classification accuracy.
However, the reason is that we cannot achieve our reproduced ERM the same performance as provided in DomainBed~\cite{gulrajani2020search}, and the accuracy of our reproduced ERM in the \textit{Location 43} domain is 55.1\%.
Therefore, \Algnameabbr{} actually improves the accuracy in this unseen test domain by 1.3\%.
As we can see in Fig.~\ref{fig:erm_embed_terra}-(c) and Fig.~\ref{fig:dispel_embed_terra}-(c), each class's instance embedding is concentrated after employing \Algnameabbr{} as in other domains.

\noindent\textbf{Observation 3: \Algnameabbr{} concentrates the distribution of each class embedding.}
The t-SNE analysis demonstrates the superiority of \Algnameabbr{}, which improves the domain generalization ability of the fine-tuned ERM by concentrating the distribution of embeddings in the same class.
Due to this manner, \Algnameabbr{} can improve classification accuracy in unseen domains, as shown in Tab.~\ref{tab:tb1} to Tab.~\ref{tab:tb5} and Appendix~\ref{appendix:exp_results}.

\begin{table}[t!]
    \small
    \renewcommand{\arraystretch}{0.6}
    \centering
    \caption{Average unseen test domain accuracy of \Algnameabbr{} by using other domain generalization algorithms as baselines. It indicates that \Algnameabbr{} can boost generalization of other algorithms.}
    \vspace{-0.2cm}
    \makebox[0.45\textwidth][c]{
        \resizebox{0.48\textwidth}{!}{
            \begin{tabular}{l c c c}
                \toprule
                 & PACS & Office-Home & Avg. \\
                \midrule
                ERM~\cite{vapnik1999overview} & 86.4 $\pm$ 0.1 & 69.9 $\pm$ 0.1 & 78.2 \\
                ERM w/ \Algnameabbr{} & \textbf{88.2} $\pm$ 0.1 & \textbf{73.3} $\pm$ 0.3 & \textbf{80.8} \\
                \midrule
                DRO~\cite{sagawa2019distributionally} & 86.8 $\pm$ 0.4 & 70.2 $\pm$ 0.3 & 78.5 \\
                DRO w/ \Algnameabbr{} & \textbf{88.2} $\pm$ 0.1 & \textbf{72.2} $\pm$ 0.1 & \textbf{80.2} \\
                \midrule
                CORAL~\cite{sun2016deep} & 86.2 $\pm$ 0.2 & 70.1 $\pm$ 0.4 & 78.2 \\
                CORAL w/ \Algnameabbr{} & \textbf{87.5} $\pm$ 0.1 & \textbf{72.8} $\pm$ 0.2 & \textbf{80.2} \\
                \midrule
                DANN~\cite{ganin2016domain} & 86.7 $\pm$ 1.1 & 69.5 $\pm$ 0.6 & 78.1 \\
                DANN w/ \Algnameabbr{} & \textbf{87.2} $\pm$ 0.0 & \textbf{72.9} $\pm$ 0.3 & \textbf{80.1} \\
                \midrule
                Mixup~\cite{wang2020heterogeneous} & 87.7 $\pm$ 0.5 & 71.2 $\pm$ 0.1 & 79.5 \\
                Mixup w/ \Algnameabbr{} & \textbf{89.1} $\pm$ 0.1 & \textbf{73.4} $\pm$ 0.1 & \textbf{81.3} \\
                \bottomrule
            \end{tabular}
        }
    }
    \vspace{-0.3cm}
    \label{tab:alg_w_dispel}
\end{table}

\subsection{Boosting Other Algorithm via \Algnameabbr{} (RQ3)}
\label{sec:boosting}

In previous experiments, we adopt fine-tuned ERM to be the base model of EMG module in \Algnameabbr{}.
Based on the experimental results showing that the proposed \Algnameabbr{} framework can improve the prediction performance in unseen domains, we are curious if \Algnameabbr{} can also enhance the generalization ability of other algorithms.

To investigate the effectiveness of the proposed \Algnameabbr{} for other approaches, we conduct experiments by comparing it with different baselines.
Specifically, we employ \Algnameabbr{} for 3 representation learning methods; 1 does not require domain labels (DRO), and the others require domain labels (CORAL and DANN).
We also adopt \Algnameabbr{} for 1 data manipulation method that requires domain labels for training (Mixup).
According to Tab.~\ref{tab:alg_w_dispel}, \Algnameabbr{} can improve the prediction accuracy of all 4 baselines on unseen test domains in the 2 benchmarks.
However, the improvements are not consistent across different algorithms.
In PACS, the improvement of DANN is insignificant compared with other methods, and the boosts of all four algorithms are less than the increase of ERM.
Nevertheless, the improvement of DANN in Office-Home is more significant than all other algorithms.
It makes sense when we consider the encoders of each fine-tuned algorithm as a different initial state for training the base model of the EMG component. 
The reason is that the initial states influence the optimization process and might lead to different model weights.

Furthermore, we observe that the higher initial accuracy of an algorithm does not guarantee better performance after leveraging \Algnameabbr{} for boosting.
In PACS, the fine-tuned Mixup model has the best domain generalization performance compared with the other four fine-tuned models, and \Algnameabbr{} then boosts it to achieve the highest accuracy in unseen test domains.
However, despite the different initial performances of five algorithms in PACS, the accuracy after \Algnameabbr{} boosting of ERM and DRO achieves the same level when CORAL and DANN cannot reach them.
Besides, by leveraging \Algnameabbr{} in the Office-Home dataset, DANN achieves performance close to ERM and Mixup when DRO performance is slightly lower than other methods.

\noindent\textbf{Observation 4: \Algnameabbr{} can improve generalization ability for different types of algorithms.}
The experimental results of utilizing \Algnameabbr{} in four algorithms demonstrate that \Algnameabbr{} can improve the generalization performance of fine-tuned models.
According to the results shown in Tab.~\ref{tab:alg_w_dispel}, despite Mixup achieving the best performance with \Algnameabbr{}, ERM can achieve equivalent results via \Algnameabbr{} without using domain labels during training.

\section{Conclusions and Future Work}

In this work, we demonstrate the efficacy of masking domain-specific features in embedding space for improving the generalization ability of a fine-tuned prediction model.
Based on this observation, we propose a post-processing fine-grained masking framework, \Algnameabbr{}, by accounting for instance discrepancies to further improve generalization ability.
Specifically, \Algnameabbr{} uses a mask generator that generates a distinct mask for each input data, which is used to filter domain-specific features in embedding space.
The results on five benchmarks demonstrate that \Algnameabbr{} outperforms the state-of-the-art baselines.
Regarding future directions, we plan to explore the potential of exploiting \Algnameabbr{} for different downstream tasks and various types of input data. To achieve this, we will consider the characteristics of each task and input data, allowing us to design an objective loss that is tailored to their specific requirements.


\newpage

{\small
\bibliographystyle{ieee_fullname}
\bibliography{egbib}
}

\clearpage
\appendix

\section*{Appendix}

\section{Proof of Theorem 1}
\label{appendix:proof_of_theorem1}
\setcounter{theorem}{0}
\begin{theorem}[\textbf{Generalization Error Bound}] Let $\widetilde{\boldsymbol{x}}_k^{T}$ be a masked instance of $\boldsymbol{x}_k^T$ on an unseen domain $T$. Given an instance embedding $\boldsymbol{z}_k^T$ satisfies the composition of domain-specific $\boldsymbol{z}_k^{\text{T-sh}}$ and domain-sharing $\boldsymbol{z}_k^{\text{T-sp}}$, where $\hat{f}(\boldsymbol{x}_k^T) \!=\! \hat{c}(\boldsymbol{z}_k^T)$ be the predicted outcomes. For arbitrary distance function $d(\cdot,\cdot)$ in metric space, the generalization error $\textbf{GE} = \mathbf{E}_{\mathcal{X}}[ ~\textit{d}\big(f(\boldsymbol{x}_k^T), \hat{f}(\widetilde{\boldsymbol{x}}_k^{T}) \big)~ ]$ of \Algnameabbr{} framework can be bounded as:
\begin{equation} 
    \small
    \textbf{GE}
    \leq \mathbf{E}_{\mathcal{X}}[\textit{d}\big(c(\boldsymbol{z}_k^{\text{T-sh}}), \hat{c}(\widetilde{\boldsymbol{z}}_{k}^{\text{T-sh}}) \big)]
    + \mathbf{E}_{\mathcal{X}}[\textit{d}\big(c(\boldsymbol{z}_k^{\text{T-sp}}), \hat{c}(\widetilde{\boldsymbol{z}}_{k}^{\text{T-sp}}) \big)] 
\label{eq:thm_1}
\end{equation}
where $\widetilde{\boldsymbol{z}}_{k}^{\text{T}} \!=\! \boldsymbol{z}_k^{T} \odot m_k^T$ is composed of remained domain-specific embedding $\widetilde{\boldsymbol{z}}_{k}^{\text{T-sp}}$ and preserved domain-sharing embedding $\widetilde{\boldsymbol{z}}_{k}^{\text{T-sh}}$.
\label{apx:thm:1} 
\end{theorem}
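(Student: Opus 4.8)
The plan is to reduce the bound to a single application of the triangle inequality, after rewriting both the target value and the predicted value as a sum of a domain-shared contribution and a domain-specific contribution. First I would unfold the definitions of the predictors: since $f(\boldsymbol{x}_k^T) = c(\boldsymbol{z}_k^T)$ and $\hat{f}(\widetilde{\boldsymbol{x}}_k^{T}) = \hat{c}(\widetilde{\boldsymbol{z}}_k^{T})$, the integrand of $\textbf{GE}$ is exactly $d\big(c(\boldsymbol{z}_k^T),\, \hat{c}(\widetilde{\boldsymbol{z}}_k^T)\big)$. It therefore suffices to bound this quantity pointwise and then take $\mathbf{E}_{\mathcal{X}}[\cdot]$, invoking linearity of expectation only at the very end so that the two terms on the right of Eq.~\ref{eq:thm_1} appear separately.

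Next I would exploit the stated composition of the embedding. Treating the domain-shared and domain-specific features as supported on disjoint coordinate blocks of $\mathcal{Z}$, the embedding splits additively as $\boldsymbol{z}_k^T = \boldsymbol{z}_k^{\text{T-sh}} + \boldsymbol{z}_k^{\text{T-sp}}$; because the mask acts coordinate-wise through $\widetilde{\boldsymbol{z}}_k^T = \boldsymbol{z}_k^{T} \odot m_k^T$, the masked embedding inherits the same disjoint-support split, $\widetilde{\boldsymbol{z}}_k^T = \widetilde{\boldsymbol{z}}_k^{\text{T-sh}} + \widetilde{\boldsymbol{z}}_k^{\text{T-sp}}$, with no cross terms. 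Passing this additive structure through the frozen predictor head, I would write $c(\boldsymbol{z}_k^T) = c(\boldsymbol{z}_k^{\text{T-sh}}) + c(\boldsymbol{z}_k^{\text{T-sp}})$ and $\hat{c}(\widetilde{\boldsymbol{z}}_k^T) = \hat{c}(\widetilde{\boldsymbol{z}}_k^{\text{T-sh}}) + \hat{c}(\widetilde{\boldsymbol{z}}_k^{\text{T-sp}})$. With both arguments of $d$ now expressed as two-term sums, grouping the shared terms together and the specific terms together gives $d\big(a_{sh}+a_{sp},\, b_{sh}+b_{sp}\big) \le d(a_{sh},b_{sh}) + d(a_{sp},b_{sp})$, and taking expectations yields Eq.~\ref{eq:thm_1}.

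I expect the main obstacle to be rigorously justifying the two structural assumptions that make this split legitimate, since the statement is phrased for an \emph{arbitrary} distance function. The inequality $d(a_{sh}+a_{sp},\, b_{sh}+b_{sp}) \le d(a_{sh},b_{sh}) + d(a_{sp},b_{sp})$ is false for a genuinely arbitrary metric; it requires translation invariance, so in practice I would specialize $d$ to a norm-induced metric $d(u,v) = \lVert u - v \rVert$, where it collapses to the ordinary triangle inequality applied to $(a_{sh}-b_{sh}) + (a_{sp}-b_{sp})$. The additive pass-through $c(\boldsymbol{z}^{\text{T-sh}}+\boldsymbol{z}^{\text{T-sp}}) = c(\boldsymbol{z}^{\text{T-sh}}) + c(\boldsymbol{z}^{\text{T-sp}})$ likewise presupposes that the predictor head $c$ (and the frozen $\hat{c}$) act additively over the shared/specific block decomposition, i.e.\ effectively linearly on that split. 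I would state both of these as working hypotheses, and emphasize that the disjoint-support decomposition induced by the coordinate-wise mask $m_k^T$ is precisely what guarantees that the two blocks recombine to the full (masked and unmasked) embedding without interaction terms, so that the clean two-term bound goes through.
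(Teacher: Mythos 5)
Your proposal matches the paper's own proof essentially step for step: decompose the embedding additively into shared and specific parts, use linearity of the predictor $c$ to distribute it over the sum, apply the split triangle inequality $d(a_{sh}+a_{sp},\,b_{sh}+b_{sp}) \le d(a_{sh},b_{sh}) + d(a_{sp},b_{sp})$, and take expectations. You are in fact more careful than the paper, which explicitly invokes linearity of $c$ but silently assumes the translation-invariance of $d$ that you correctly flag as necessary for the ``arbitrary distance function'' phrasing to go through.
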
 

\begin{proof}
    In order to estimate the generalization error of \Algnameabbr{} on unseen domain $\mathcal{T}$, we calculate the expected values of distance between $f(\boldsymbol{x}_k^T)$ and $ \hat{f}(\widetilde{\boldsymbol{x}}_k^{T})$. Hence, the estimated generalization error of $\mathcal{T}$ can be elaborated as:
    \begin{align}
        \notag \textbf{GE} &= \mathbf{E}_{\mathcal{T}}[ ~\textit{d}\big(f(\boldsymbol{x}_k^T), \hat{f}(\widetilde{\boldsymbol{x}}_k^{T}) \big)~ ] \\
        &= \int_{\mathcal{X}} \textit{d}\big(f(\boldsymbol{x}_k^T), \hat{f}(\widetilde{\boldsymbol{x}}_k^{T}) \big) P(\mathcal{T}) \,d\mathcal{T}
        \label{apx:eq:ge}
    \end{align}
    where $P(\mathcal{T})$ denotes cumulative distribution function of $\mathcal{T}$.
    As a lower generalized error $\textbf{GE}$ represents better generalization capability, we can observe from Eq.~\ref{apx:eq:ge} that the closer $\textit{d}\big(f(\boldsymbol{x}_k^T), \hat{f}(\widetilde{\boldsymbol{x}}_k^{T}) \big)$ approaches zero, the better generalization capability is obtained. 
    
    In this manner, we now discus the upper bound of $\textit{d}\big(f(\boldsymbol{x}_k^T), \hat{f}(\widetilde{\boldsymbol{x}}_k^{T}) \big)$. This also ensure the upper bound of $\textbf{GE}$. Following the properties that each instance's embedding $\boldsymbol{z}_k^T$ can be composed of domain-specific $\boldsymbol{z}_k^{\text{T-sh}}$ and domain-sharing $\boldsymbol{z}_k^{\text{T-sp}}$, we consider upper bound as follows,
    \begin{align}
        \notag &\textit{d}\big(f(\boldsymbol{x}_k^T), \hat{f}(\widetilde{\boldsymbol{x}}_k^{T}) \big) \\
        \notag &= \textit{d}\big(c(\boldsymbol{z}_k^T)~,~ \hat{f}(\widetilde{\boldsymbol{z}}_k^{T}) \big) \\
        &= \textit{d}\big(c(\boldsymbol{z}_k^{\text{T-sh}} + \boldsymbol{z}_k^{\text{T-sp}})~,~ \hat{c}( \widetilde{\boldsymbol{z}}_k^{\text{T-sh}} + \widetilde{\boldsymbol{z}}_k^{\text{T-sp}} \big)
        \label{apx:eq:dis}
    \end{align}
    Since $c(\cdot)$ is the linear predictor, we can now recast the Eq.~\ref{apx:eq:dis} in the following,
    \begin{align}
        \notag &\textit{d}\big(f(\boldsymbol{x}_k^T), \hat{f}(\widetilde{\boldsymbol{x}}_k^{T}) \big) \\
        \notag &= \textit{d}\big(c(\boldsymbol{z}_k^{\text{T-sh}} + \boldsymbol{z}_k^{\text{T-sp}})~,~ \hat{c}( \widetilde{\boldsymbol{z}}_k^{\text{T-sh}} + \widetilde{\boldsymbol{z}}_k^{\text{T-sp}}\big) \\
        \notag &=  \textit{d}\big(c(\boldsymbol{z}_k^{\text{T-sh}}) + c(\boldsymbol{z}_k^{\text{T-sp}})~,~ \hat{c}( \widetilde{\boldsymbol{z}}_k^{\text{T-sh}} ) + \hat{c}(\widetilde{\boldsymbol{z}}_k^{\text{T-sp}}) \big) \\
        & \leq \textit{d}\big(c(\boldsymbol{z}_k^{\text{T-sh}}),  \hat{c}( \widetilde{\boldsymbol{z}}_k^{\text{T-sh}}) \big) + \textit{d}\big(c(\boldsymbol{z}_k^{\text{T-sp}}),  \hat{c}( \widetilde{\boldsymbol{z}}_k^{\text{T-sp}}) \big)
        \label{apx:eq:bound}
    \end{align}

    Following the conclusion of Eq.~\ref{apx:eq:ge} and Eq.~\ref{apx:eq:bound},  we have the upper bound of $\textbf{GE}$ as follows:
    \begin{align}
        \small
        \notag &\textbf{GE} = \mathbf{E}_{\mathcal{T}}[ ~\textit{d}\big(f(\boldsymbol{x}_k^T), \hat{f}(\widetilde{\boldsymbol{x}}_k^{T}) \big)~ ] \\
        \notag &= \int_{\mathcal{X}} \textit{d}\big(f(\boldsymbol{x}_k^T), \hat{f}(\widetilde{\boldsymbol{x}}_k^{T}) \big) P(\mathcal{T}) \,d\mathcal{T} \\
        \notag &\leq \int_{\mathcal{X}} \!\left[ \textit{d}\big(c(\boldsymbol{z}_k^{\text{T-sh}}),  \hat{c}( \widetilde{\boldsymbol{z}}_k^{\text{T-sh}}) \big) + \textit{d}\big(c(\boldsymbol{z}_k^{\text{T-sp}}),  \hat{c}( \widetilde{\boldsymbol{z}}_k^{\text{T-sp}}) \big) \right] \!P(\mathcal{T})d\mathcal{T} \\
        \notag & = \int_{\mathcal{X}} \textit{d}\big(c(\boldsymbol{z}_k^{\text{T-sh}}),  \hat{c}( \widetilde{\boldsymbol{z}}_k^{\text{T-sh}}) \big) \ P(\mathcal{T}) \ d\mathcal{T} \ +\\
        \notag & \ \ \ \ \ \ \ \ \  \ \ \ \ \int_{\mathcal{X}} \textit{d}\big(c(\boldsymbol{z}_k^{\text{T-sp}}),  \hat{c}( \widetilde{\boldsymbol{z}}_k^{\text{T-sp}})\big) \ P(\mathcal{T}) \ d\mathcal{T} \\
        \notag & = \mathbf{E}_{\mathcal{X}}[\textit{d}\big(c(\boldsymbol{z}_k^{\text{T-sh}}), \hat{c}(\widetilde{\boldsymbol{z}}_{k}^{\text{T-sh}}) \big)] + \mathbf{E}_{\mathcal{X}}[\textit{d}\big(c(\boldsymbol{z}_k^{\text{T-sp}}), \hat{c}(\widetilde{\boldsymbol{z}}_{k}^{\text{T-sp}}) \big)] 
    \end{align}

\end{proof}

\section{Related Works}
\label{appendix:related_works}

There are two primary branches of research in the field of domain generalization: data manipulation and representation learning.

\noindent \textbf{Data Manipulation.} The data manipulation branch aims to reduce overfitting by increasing the diversity and quantity of available training data. This is typically achieved through the use of data augmentation methods or generative models~\cite{tobin2017domain, peng2018sim, tremblay2018training, volpi2018generalizing, zhang2017mixup, xu2020adversarial, yan2020improve, wang2020heterogeneous}.

\noindent \textbf{Representation Learning.} Representation learning is another branch of methods that focuses on training an encoder that maps samples to a latent space where the embedding remains invariant to various domains~\cite{arjovsky2019invariant, sagawa2019distributionally, huang2020self, li2018learning, li2018domain, ganin2016domain, cha2022domain}.
Alternative approaches for achieving invariant learning have been proposed, including techniques such as correlation alignment~\cite{sun2016deep}, class-conditional adversarial learning~\cite{li2018domaincidg}, minimizing maximum mean discrepancy~\cite{li2018deep}, and mutual information regularization~\cite{cha2022domain} that doesn't require domain labels.

\noindent \textbf{Ensemble Learning.} There are some ensemble approaches for domain generalization, which train multiple models and then combine the predictions of these models at validation time to obtain a most generalization model. For instance, SWAD~\cite{cha2021swad} aims to find a flatter minima and suffers less from overfitting than vanilla SWA~\cite{izmailov2018averaging} by a dense and overfit-aware stochastic weight sampling strategy; EoA~\cite{arpit2022ensemble} finds that an
ensemble of moving average models outperforms a traditional ensemble of unaveraged models.


\section{Datasets Details}
\label{appendix:datasets}

To compare the efficacy of our proposed framework with existing algorithms, we conduct our experiments on 5 real-world benchmark datasets: PACS~\cite{li2017deeper}, Office-Home~\cite{venkateswara2017deep}, VLCS~\cite{fang2013unbiased}, Terra Incognita~\cite{beery2018recognition}, and DomainNet~\cite{peng2019moment}.
Specifically, PACS includes four image styles (Photo, Art, Cartoon, and Sketch), which are considered 4 different domains, and each domain has 7 classes of images (Dog, Elephant, Giraffe, Horse, Person, Guitar, and House) for training and testing.
It contains a total of $9,991$ instances in 4 domains.
Office-Home consists of 65 classes of images for training and testing.
These images belong to four image styles (Art, Clipart, Product, Real) being considered as 4 different domains.
It contains a total of $15,588$ instances in 4 domains.
VLCS includes images collected from 4 different datasets (Caltech101, LabelMe, SUN09, and VOC2007), which are considered 4 different domains, and each domain has 5 classes (Dog, Bird, Person, Car, and Chair) for training and testing.
It contains a total of $10,729$ instances in 4 domains.
Terra Incognita consists of 10 classes of photographs of wild animals taken at 4 different locations (Location 100, Location 38, Location 43, and Location 46), considered as 4 different domains.
For our experiments, we use the downloader of DomainBed~\cite{gulrajani2020search} to download the same version Terra Incognita dataset as theirs.
It contains a total of $24,788$ instances in 4 domains.
DomainNet includes 6 image styles (Clipart, Infograph, Painting, Quickdraw, Real, Sketch) considered as 6 different domains.
In each domain, there are 345 classes for training and testing.
It contains a total of $586,575$ instances in 6 domains.
DomainNet can be considered a larger-scale dataset with a more difficult multi-classification task than the other 4 benchmarks.

\section{Baselines and Implementation Details}
\label{appendix:baseline_and_implement}

\textbf{Baselines.}
To fairly compare our proposed framework with existing algorithms, we follow the settings of DomainBed~\cite{gulrajani2020search} and DeepDG~\cite{wang2022generalizing}, using the best result between DomainBed, DeepDG, and the original literature.
The comparisons include 12 baseline algorithms: ERM~\cite{vapnik1999overview}, IRM~\cite{arjovsky2019invariant}, DRO~\cite{sagawa2019distributionally}, RSC~\cite{huang2020self}, Mixup~\cite{wang2020heterogeneous}, MLDG~\cite{li2018learning}, CORAL~\cite{sun2016deep}, MMD~\cite{li2018domain}, DANN~\cite{ganin2016domain}, C-DANN~\cite{li2018deep}, DA-ERM~\cite{dubey2021adaptive}, and MIRO~\cite{cha2022domain}.
Considering that domain labels can be leveraged as additional information for learning representations mitigating domain-specific features projected to embedding space, we categorize the 12 baseline algorithms into two groups: 
$\textbf{Group 1}$: the algorithms requiring domain labels (Mixup, MLDG, CORAL, MMD, DANN, C-DANN, and DA-ERM); and 
$\textbf{Group 2}$: the algorithms without requiring domain labels (ERM, IRM, DRO, RSC, and MIRO).

Note that in Tab.~\ref{tab:tb2}, except the results of ERM and our \Algnameabbr{} are reproduced based on DeepDG~\cite{wang2022generalizing}, the results of other baselines are provided by the GitHub of the same survey paper.

\textbf{Implementation.}
All the experimental results of the proposed \Algnameabbr{} are implemented and performed based on the codebase of DeepDG~\cite{wang2022generalizing}.
Unlike DomainBed~\cite{gulrajani2020search}, our implementation does not use any data augmentation during training.
Regarding the setting of model selection, we use traditional \textit{training-domain validation set} for our implementation, which does not require utilizing domain labels to split the desired validation set.
For all the experimental results of \Algnameabbr{}, we employ ERM algorithm to fine-tune the ResNet-18 and ResNet-50 as the fine-tuned model mentioned in Sec.~\ref{sec:dispel_frame}.
Concerning the use of the EMG, we utilize ResNet50 as the base model for EMG since the 5 domain generalization benchmarks we tested are image datasets.

\textbf{DNN Architectures.}
The experimental results are all fine-tuned on the basis of ResNets.
Since larger ResNets are known to have better generalization ability, we mainly conduct experiments with ResNet-50 models for all 5 benchmark datasets, and we also conduct the results of \Algnameabbr{} based on ResNet-18 as a reference shown in Tab.~\ref{tab:tb2}.
For both the two base network architectures, we both use the ResNet-18 and ResNet-50 pre-trained on ImageNet.
As for the EMG component in \Algnameabbr{}, we employ a ResNet50 pre-trained on ImageNet as the base model.

\begin{table}[t!]
    \renewcommand{\arraystretch}{1.}
    \centering
    \caption{Hyper-parameters of \Algnameabbr{} based on ERM.}
    \makebox[0.45\textwidth][c]{
        \resizebox{0.48\textwidth}{!}{
            \begin{tabular}{l c c c c c}
                \toprule
                 & PACS & Office-Home & VLCS & TerraInc & DomainNet \\
                \midrule
                \multicolumn{6}{c}{\textbf{DNN Architecture}: ResNet-18} \\
                \midrule
                Batch size & 128 & 128 & 128 & 128 & 128 \\
                Learning rate & $1 \times 10^{-3}$ & $1 \times 10^{-3}$ & $3 \times 10^{-4}$ & $1 \times 10^{-4}$ & $1 \times 10^{-4}$ \\
                $\tau$ & 0.1 & 0.1 & 0.1 & 0.1 & 0.1 \\
                \midrule
                \multicolumn{6}{c}{\textbf{DNN Architecture}: ResNet-50} \\
                \midrule
                Batch size & 64 & 64 & 64 & 64 & 64 \\
                Learning rate & $5 \times 10^{-5}$ & $1 \times 10^{-3}$ & $1 \times 10^{-3}$ & $2 \times 10^{-4}$ & $1 \times 10^{-4}$ \\
                $\tau$ & 0.1 & 0.1 & 0.1 & 0.1 & 0.1 \\
                \bottomrule
            \end{tabular}
        }
    }
    \label{tab:hyper-para}
\end{table}

\begin{table}[h!]
    \small
    \renewcommand{\arraystretch}{.7}
    \centering
    \caption{Hyper-parameters of \Algnameabbr{} for boosting other algorithms, where the DNN architecture is ResNet-50.}
    \makebox[0.45\textwidth][c]{
        \resizebox{0.48\textwidth}{!}{
            \begin{tabular}{l c c c c}
                \toprule
                 & DRO & CORAL & DANN & Mixup \\
                \midrule
                \multicolumn{5}{c}{\textbf{Dataset}: PACS} \\
                \midrule
                Batch size & 64 & 64 & 64 & 64 \\
                Learning rate & $1 \times 10^{-3}$ & $1 \times 10^{-3}$ & $5 \times 10^{-3}$ & $5 \times 10^{-4}$ \\
                $\tau$ & 0.1 & 0.1 & 0.1 & 0.1 \\
                \midrule
                \multicolumn{5}{c}{\textbf{Dataset}: Office-Home} \\
                \midrule
                Batch size & 64 & 64 & 64 & 64 \\
                Learning rate & $1 \times 10^{-3}$ & $1 \times 10^{-3}$ & $1 \times 10^{-3}$ & $1 \times 10^{-3}$ \\
                $\tau$ & 0.1 & 0.1 & 0.1 & 0.1 \\
                \bottomrule
            \end{tabular}
        }
    }
    \label{tab:hyper-para-derivative}
\end{table}

\begin{table*}[t!]
    \small
    \renewcommand{\arraystretch}{0.8}
    \centering
    \caption{Each unseen test domain accuracy of \Algnameabbr{}.}
    \makebox[0.45\textwidth][c]{
        \resizebox{0.9\textwidth}{!}{
            \begin{tabular}{l c c c c c c}
                \toprule
                \multicolumn{7}{c}{\textbf{Dataset}: PACS} \\
                \midrule
                 & Art Painting & Cartoon & Photo & Sketch & - & - \\
                \midrule
                \Algnameabbr{} (ResNet-18) & 83.6 $\pm$ 0.3 & 79.0 $\pm$ 0.2 & 97.0 $\pm$ 0.0 & 81.8 $\pm$ 0.0 & - & - \\
                \Algnameabbr{} (ResNet-50) & 87.1 $\pm$ 0.1 & 82.5 $\pm$ 0.0 & 98.0 $\pm$ 0.1 & 85.2 $\pm$ 0.1 & - & - \\
                \midrule
                \multicolumn{7}{c}{\textbf{Dataset}: Office-Home} \\
                \midrule
                 & Art & Clipart & Product & Real & - & - \\
                \midrule
                \Algnameabbr{} (ResNet-18) & 61.4 $\pm$ 0.0 & 53.9 $\pm$ 0.2 & 76.0 $\pm$ 0.1 & 77.8 $\pm$ 0.0 & - & - \\
                \Algnameabbr{} (ResNet-50) & 71.3 $\pm$ 0.5 & 59.4 $\pm$ 0.4 & 80.3 $\pm$ 0.3 & 82.1 $\pm$ 0.0 & - & - \\
                \midrule
                \multicolumn{7}{c}{\textbf{Dataset}: VLCS} \\
                \midrule
                 & Caltech101 & LabelMe & SUN09 & VOC2007 & - & - \\
                \midrule
                \Algnameabbr{} (ResNet-18) & 97.2 $\pm$ 0.0 & 62.6 $\pm$ 0.1 & 75.0 $\pm$ 0.1 & 76.9 $\pm$ 0.1 & - & - \\
                \Algnameabbr{} (ResNet-50) & 98.3 $\pm$ 0.4 & 65.3 $\pm$ 0.1 & 77.2 $\pm$ 0.1 & 76.3 $\pm$ 0.1 & - & - \\
                \midrule
                \multicolumn{7}{c}{\textbf{Dataset}: Terra Incognita} \\
                \midrule
                 & Location 100 & Location 38 & Location 43 & Location 46 & - & - \\
                \midrule
                \Algnameabbr{} (ResNet-18) & 44.4 $\pm$ 0.4 & 49.6 $\pm$ 0.7 & 48.1 $\pm$ 0.2 & 37.3 $\pm$ 0.1 & - & - \\
                \Algnameabbr{} (ResNet-50) & 54.7 $\pm$ 0.3 & 48.1 $\pm$ 0.0 & 56.3 $\pm$ 0.3 & 42.3 $\pm$ 0.2 & - & -  \\
                \midrule
                \multicolumn{7}{c}{\textbf{Dataset}: DomainNet} \\
                \midrule
                 & Clipart & Infograph & Painting & Quickdraw & Real & Sketch \\
                \midrule
                \Algnameabbr{} (ResNet-18) & 44.6 $\pm$ 0.0 & 14.2 $\pm$ 0.0 & 39.7 $\pm$ 0.0 & 10.3 $\pm$ 0.0 & 45.6 $\pm$ 0.0 & 40.8 $\pm$ 0.0 \\
                \Algnameabbr{} (ResNet-50) & 63.4 $\pm$ 0.0 & 20.1 $\pm$ 0.1 & 48.2 $\pm$ 0.0 & 14.2 $\pm$ 0.0 & 63.4 $\pm$ 0.0 & 54.9 $\pm$ 0.0 \\
                \bottomrule
            \end{tabular}
        }
    }
    \label{tab:dispel_result}
\end{table*}

\subsection{Hyper-parameters of \Algnameabbr{}}
In the proposed \Algnameabbr{} framework, the hyper-parameters are composed of batch size, learning rate, and $\tau$ in Eq.~\ref{eq:mask}, where $\tau$ is the only hyper-parameter that is related to our algorithm. The hyper-parameters of \Algnameabbr{} for each benchmark dataset are shown in Tab.~\ref{tab:hyper-para}.

\subsection{Hyper-parameters of \Algnameabbr{} for Boosting Other Algorithms}
As shown in Sec.~\ref{sec:boosting}, we leverage our \Algnameabbr{} to further improve the prediction performance on unseen test domain for four existing domain generalization algorithms on PACS and Office Home, where all the DNN architectures are ResNet-50.
The hyper-parameters of the \Algnameabbr{} derivative models for the two datasets are shown in Tab.~\ref{tab:hyper-para-derivative}.

\section{Experimental Results of \Algnameabbr{}}
\label{appendix:exp_results}
To closely investigate the fine-grained behavior of \Algnameabbr{} in Sec.~\ref{sec:rq2}, we observe the prediction accuracy in each unseen test domain of all five domain generalization benchmark datasets.
In Tab.~\ref{tab:dispel_result}, we show the experimental results of \Algnameabbr{} on each unseen domain of five domain generalization benchmark datasets based on the two DNN architectures, ResNet-18 and ResNet-50.
Based on the experimental results on each unseen domain, we conclude the \textbf{Observation 2: \Algnameabbr{} possesses stable generalizing efficacy.}
The results show that \Algnameabbr{} maintains its stable efficacy in improving generalization ability over more different data distributions in more diverse classes of data. 
And these results reflect the purpose of the EMG module that considers each instance for fine-grained domain-specific feature masking.

\section{Visualization Analysis via t-SNE}
\label{appendix:t-sne}
To illustrate how \Algnameabbr{} improves generalization by blocking domain-specific features in the embedding space, we use t-SNE in the unseen test domains of all five benchmark datasets by comparing the embedding with and without \Algnameabbr{}, as shown in Fig.~\ref{fig:appendix_erm_embed_pacs} to Fig.~\ref{fig:appendix_dispel_embed_domainnet}.
The key observation is that \Algnameabbr{} aims to make each class more concentrated and separate them better.
Taking PACS as an example, by drawing down more precise decision boundaries, the predictor can achieve better accuracy in the unseen \textit{Art Painting} domain, in which \Algnameabbr{} enhances the most accuracy among the 4 domains as shown in Tab.~\ref{tab:tb3}.
Even in \textit{Cartoon} domain where \Algnameabbr{} only raises 0.7\% accuracy, it shows the same intention to concentrate the embedding distribution for each class in Fig.~\ref{fig:appendix_erm_embed_pacs}-(b) and Fig.~\ref{fig:appendix_dispel_embed_pacs}-(b).
As for the unseen \textit{Photo} domain, the base algorithm ERM has performed 96.7\% accuracy, which means that Fig.~\ref{fig:appendix_erm_embed_pacs}-(c) reveals what a high-quality representation looks like.
Compared to Fig.~\ref{fig:appendix_dispel_embed_pacs}-(c), \Algnameabbr{} follows the initial distribution and ameliorates the embedding to compress the distributions of each class.

Based on the t-SNE visualization analysis, we conclude \textbf{Observation 3: \Algnameabbr{} concentrate the distribution of each class embedding.}
The t-SNE analysis demonstrates the superiority of \Algnameabbr{}, which improves the domain generalization ability of the fine-tuned ERM by concentrating the distribution of embeddings in the same class.

\begin{figure*}[tbh!]
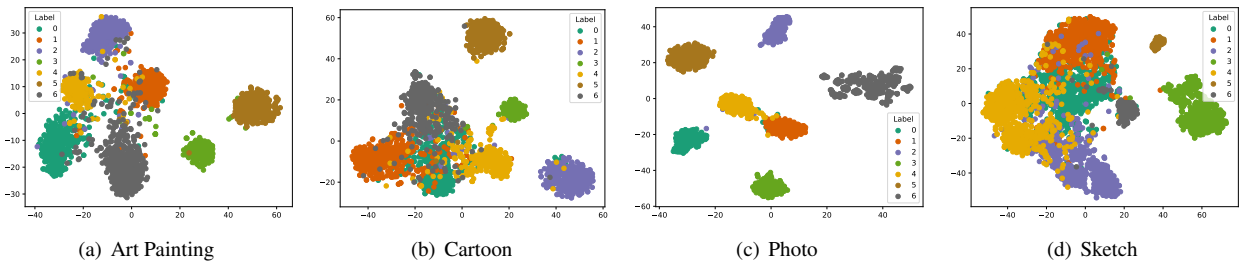

    \centering
    \subfigure[Art Painting]{
    \centering
    \begin{minipage}[t]{0.23\linewidth}
	    \includegraphics[width=0.99\linewidth]{figures/PACS/embed_x_0.pdf}
    \end{minipage}%
    }
    \subfigure[Cartoon]{
    \centering
    \begin{minipage}[t]{0.23\linewidth}
	    \includegraphics[width=0.99\linewidth]{figures/PACS/embed_x_1.pdf}
    \end{minipage}%
    }
    \subfigure[Photo]{
    \centering
    \begin{minipage}[t]{0.23\linewidth}
	    \includegraphics[width=0.99\linewidth]{figures/PACS/embed_x_2.pdf}
    \end{minipage}%
    }
    \subfigure[Sketch]{
    \centering
    \begin{minipage}[t]{0.23\linewidth}
	    \includegraphics[width=0.99\linewidth]{figures/PACS/embed_x_3.pdf}
    \end{minipage}%
    }
    \vspace{-4mm}
    \caption{t-SNE visualization of ERM embedding in four unseen test domains of PACS.}
    \label{fig:appendix_erm_embed_pacs}
    \vspace{-3mm}
\end{figure*}

\begin{figure*}[tbh!]
\setlength{\abovecaptionskip}{0mm}
\setlength{\belowcaptionskip}{-5mm}
    \centering
    \subfigure[Art Painting]{
    \centering
    \begin{minipage}[t]{0.23\linewidth}
	\includegraphics[width=1.0\linewidth]{figures/PACS/masked_embed_x_0.pdf}
    \end{minipage}%
    }
    \subfigure[Cartoon]{
    \centering
    \begin{minipage}[t]{0.23\linewidth}
	\includegraphics[width=1.0\linewidth]{figures/PACS/masked_embed_x_1.pdf}
    \end{minipage}%
    }
    \subfigure[Photo]{
    \centering
    \begin{minipage}[t]{0.23\linewidth}
	\includegraphics[width=1.0\linewidth]{figures/PACS/masked_embed_x_2.pdf}
    \end{minipage}%
    }
    \subfigure[Sketch]{
    \centering
    \begin{minipage}[t]{0.23\linewidth}
	\includegraphics[width=1.0\linewidth]{figures/PACS/masked_embed_x_3.pdf}
    \end{minipage}%
    }
    \caption{t-SNE visualization of \Algnameabbr{} embedding in four unseen test domains of PACS.}
    \vspace{1mm}
    \label{fig:appendix_dispel_embed_pacs}
\end{figure*}

\begin{figure*}[tbh!]
    \centering
    \subfigure[Art]{
    \centering
    \begin{minipage}[t]{0.23\linewidth}
	    \includegraphics[width=1.0\linewidth]{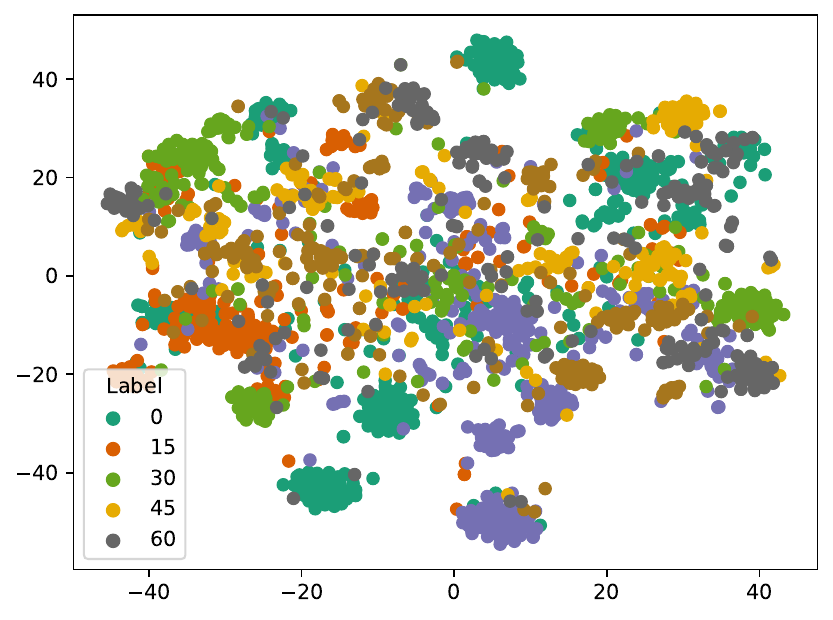}
    \end{minipage}%
    }
    \subfigure[Clipart]{
    \centering
    \begin{minipage}[t]{0.23\linewidth}
	    \includegraphics[width=0.99\linewidth]{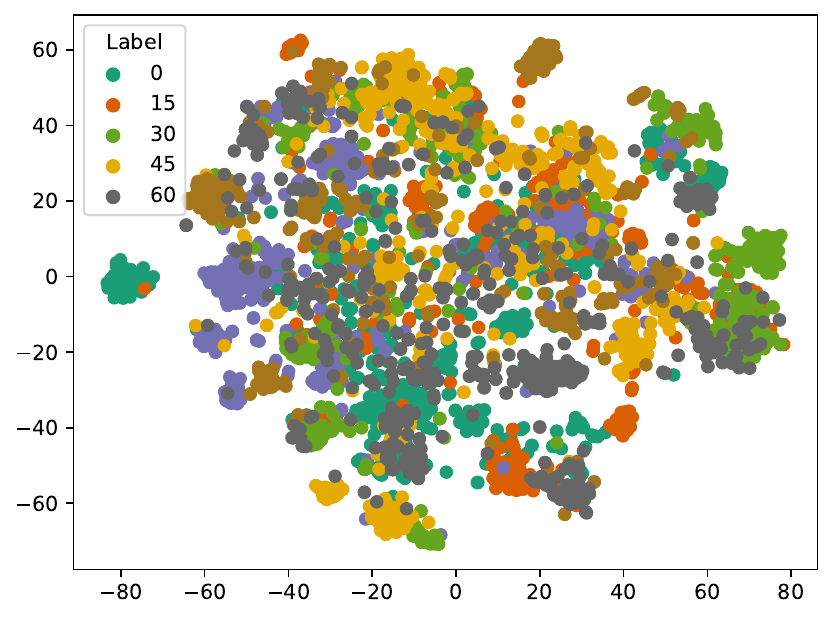}
    \end{minipage}%
    }
    \subfigure[Product]{
    \centering
    \begin{minipage}[t]{0.23\linewidth}
	    \includegraphics[width=0.99\linewidth]{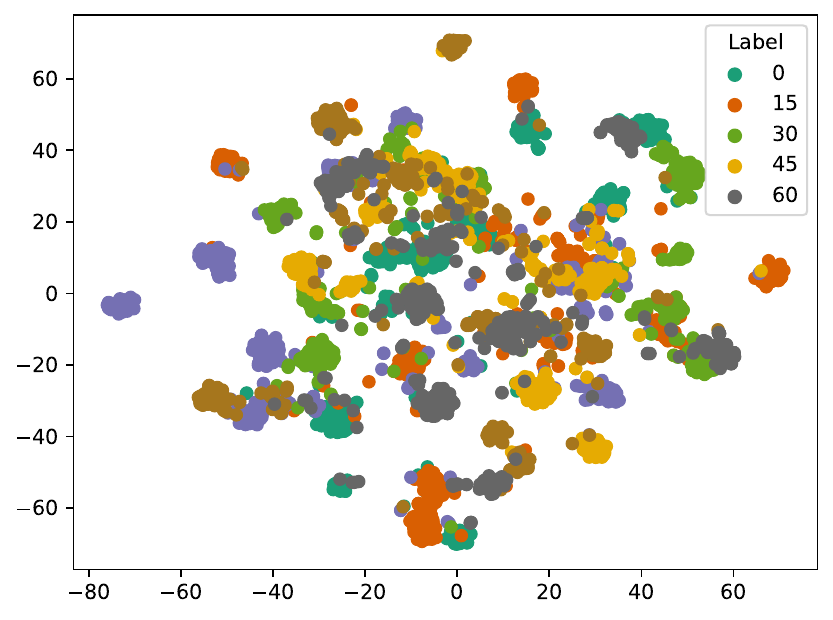}
    \end{minipage}%
    }
    \subfigure[Real]{
    \centering
    \begin{minipage}[t]{0.23\linewidth}
	    \includegraphics[width=0.99\linewidth]{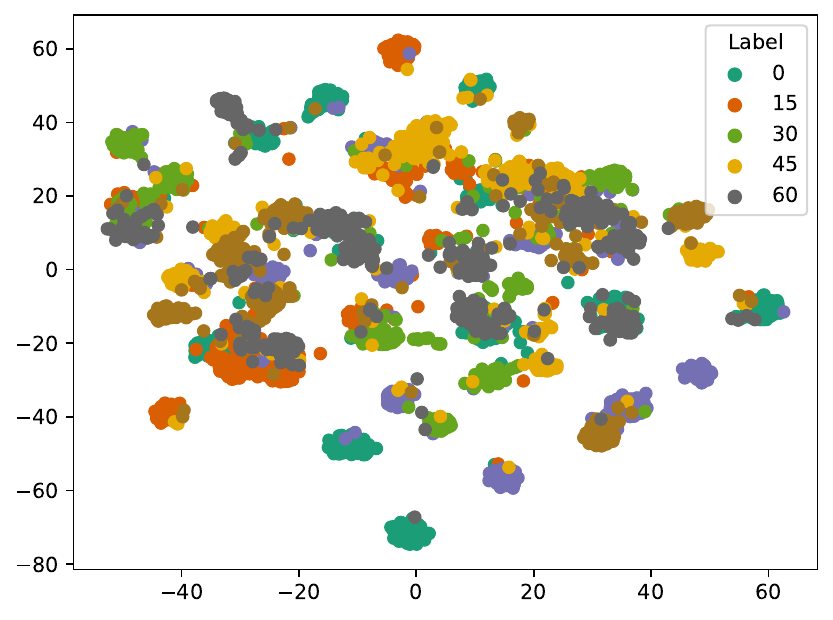}
    \end{minipage}%
    }
    \vspace{-4mm}
    \caption{t-SNE visualization of ERM embedding in four unseen test domains of Office-Home.}
    \label{fig:appendix_erm_embed_office}
    \vspace{-3mm}
\end{figure*}

\begin{figure*}[tbh!]
\setlength{\abovecaptionskip}{0mm}
\setlength{\belowcaptionskip}{-5mm}
    \centering
    \subfigure[Art]{
    \centering
    \begin{minipage}[t]{0.23\linewidth}
	\includegraphics[width=1.0\linewidth]{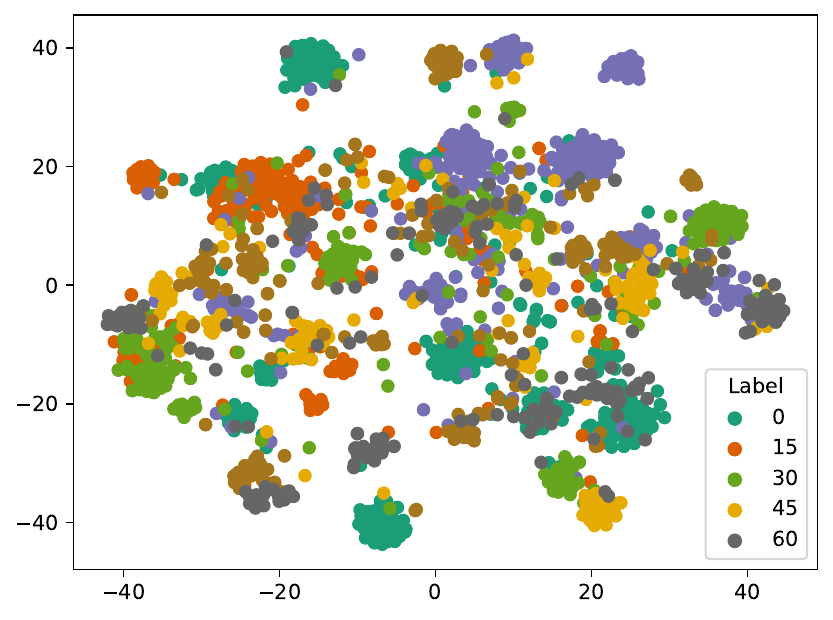}
    \end{minipage}%
    }
    \subfigure[Clipart]{
    \centering
    \begin{minipage}[t]{0.23\linewidth}
	\includegraphics[width=1.0\linewidth]{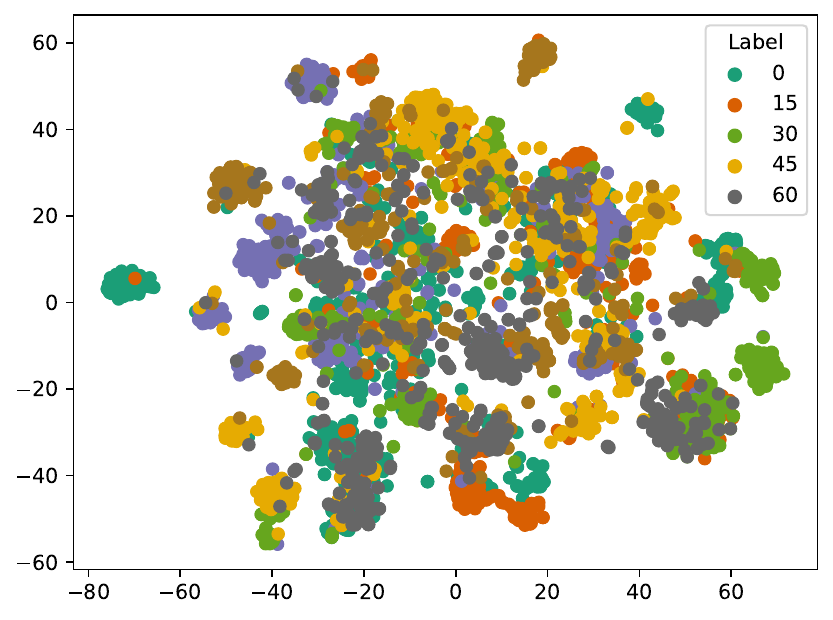}
    \end{minipage}%
    }
    \subfigure[Product]{
    \centering
    \begin{minipage}[t]{0.23\linewidth}
	\includegraphics[width=1.0\linewidth]{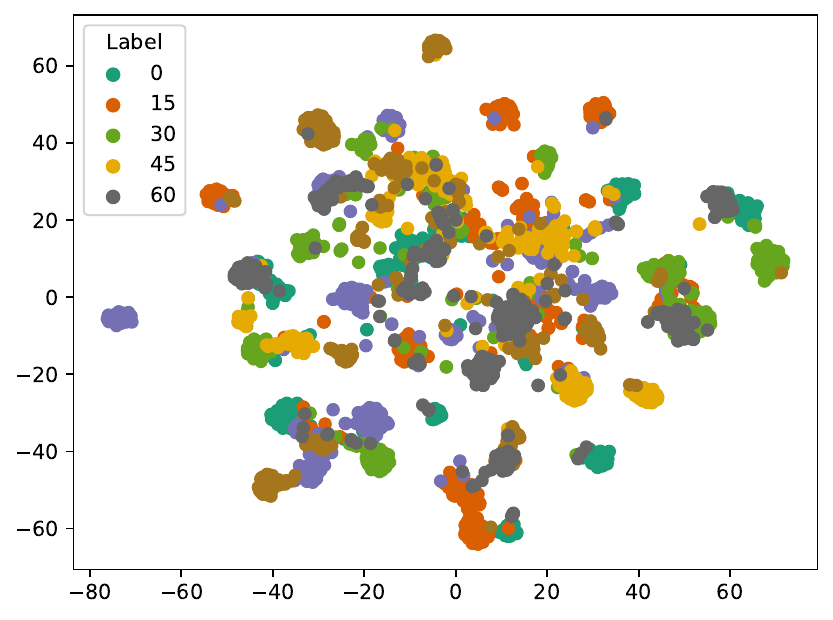}
    \end{minipage}%
    }
    \subfigure[Real]{
    \centering
    \begin{minipage}[t]{0.23\linewidth}
	\includegraphics[width=1.0\linewidth]{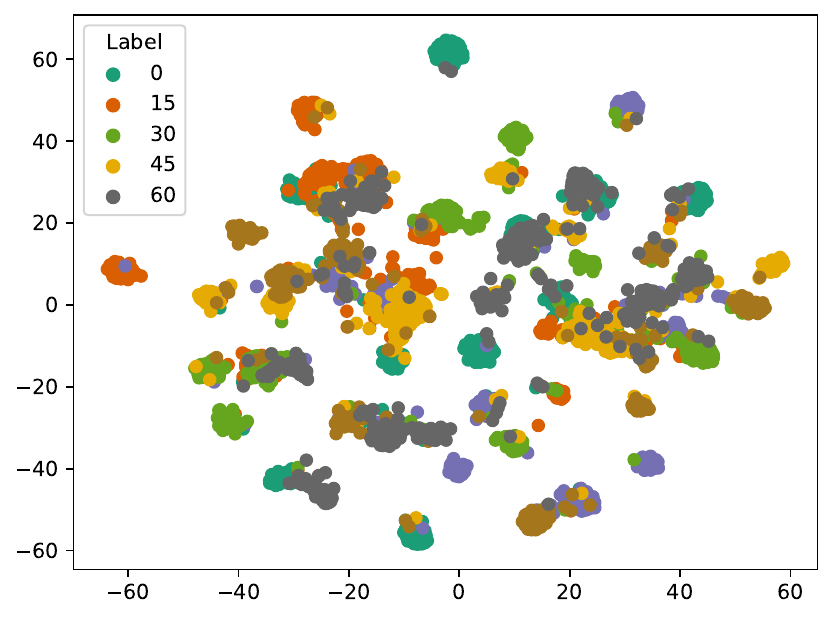}
    \end{minipage}%
    }
    \caption{t-SNE visualization of \Algnameabbr{} embedding in four unseen test domains of Office-Home.}
    \vspace{1mm}
    \label{fig:appendix_dispel_embed_office}
\end{figure*}

\begin{figure*}[tbh!]
    \centering
    \subfigure[Caltech101]{
    \centering
    \begin{minipage}[t]{0.23\linewidth}
	    \includegraphics[width=1.0\linewidth]{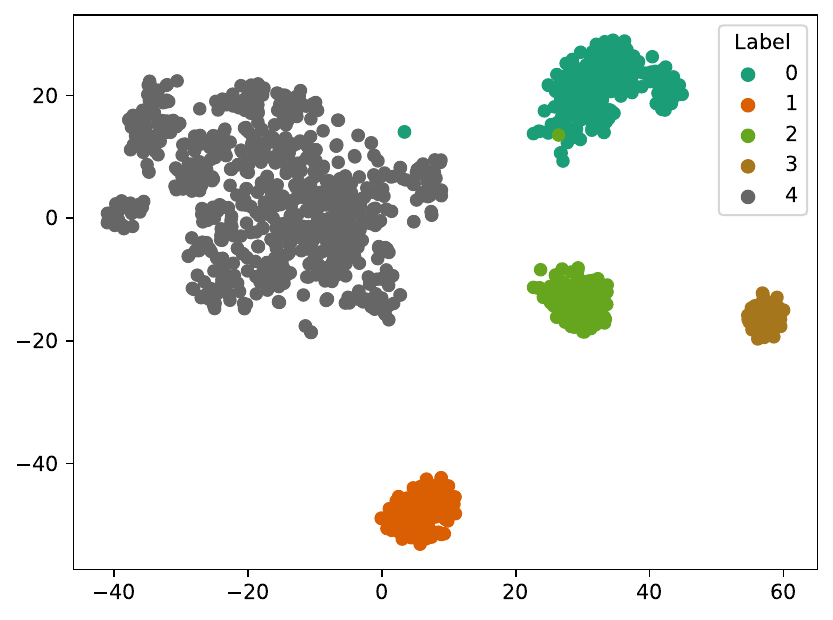}
    \end{minipage}%
    }
    \subfigure[LabelMe]{
    \centering
    \begin{minipage}[t]{0.23\linewidth}
	    \includegraphics[width=0.99\linewidth]{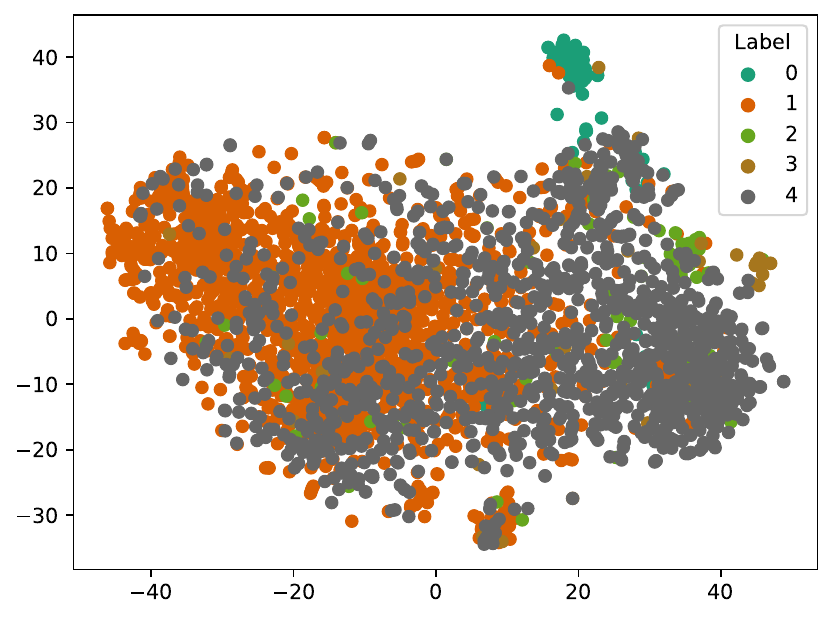}
    \end{minipage}%
    }
    \subfigure[SUN09]{
    \centering
    \begin{minipage}[t]{0.23\linewidth}
	    \includegraphics[width=0.99\linewidth]{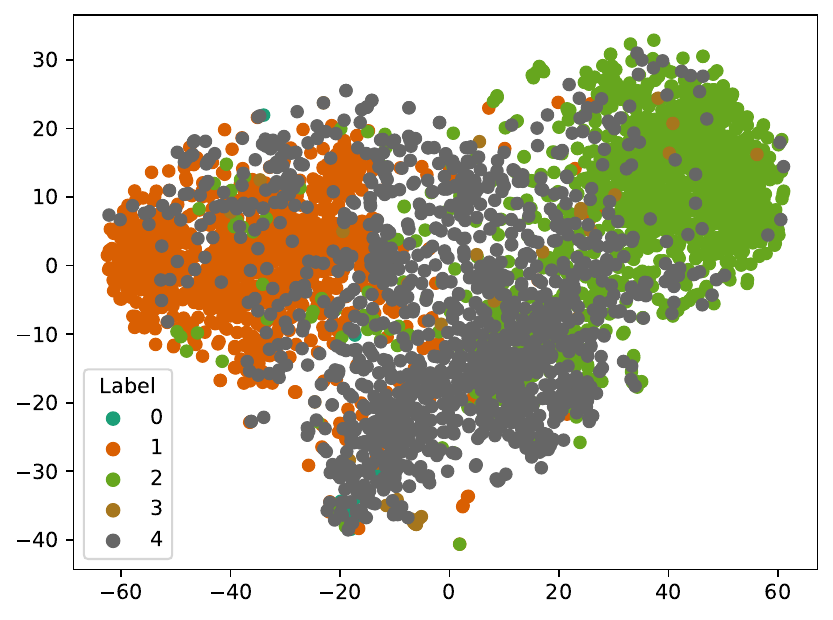}
    \end{minipage}%
    }
    \subfigure[VOC2007]{
    \centering
    \begin{minipage}[t]{0.23\linewidth}
	    \includegraphics[width=0.99\linewidth]{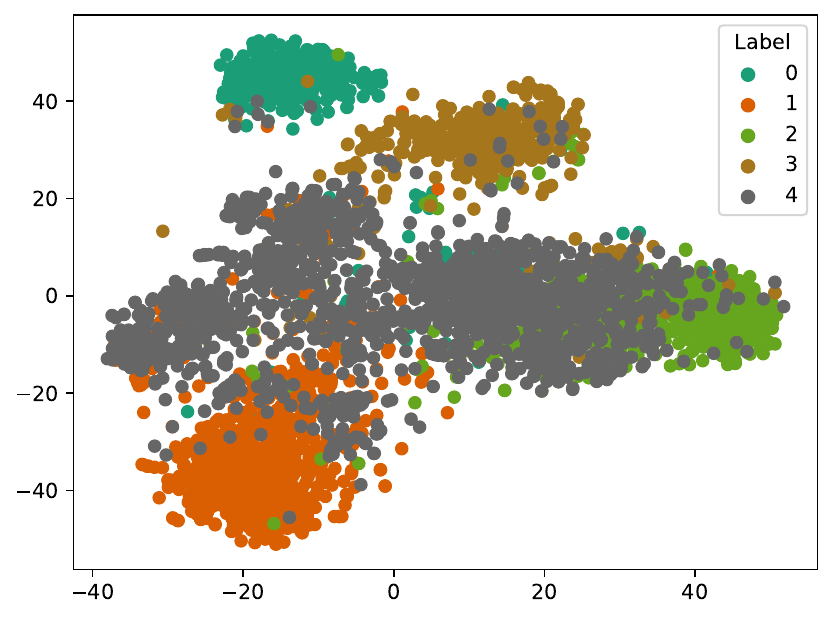}
    \end{minipage}%
    }
    \vspace{-4mm}
    \caption{t-SNE visualization of ERM embedding in four unseen test domains of VLCS.}
    \label{fig:appendix_erm_embed_vlcs}
    \vspace{-3mm}
\end{figure*}

\begin{figure*}[tbh!]
\setlength{\abovecaptionskip}{0mm}
\setlength{\belowcaptionskip}{-5mm}
    \centering
    \subfigure[Caltech101]{
    \centering
    \begin{minipage}[t]{0.23\linewidth}
	\includegraphics[width=1.0\linewidth]{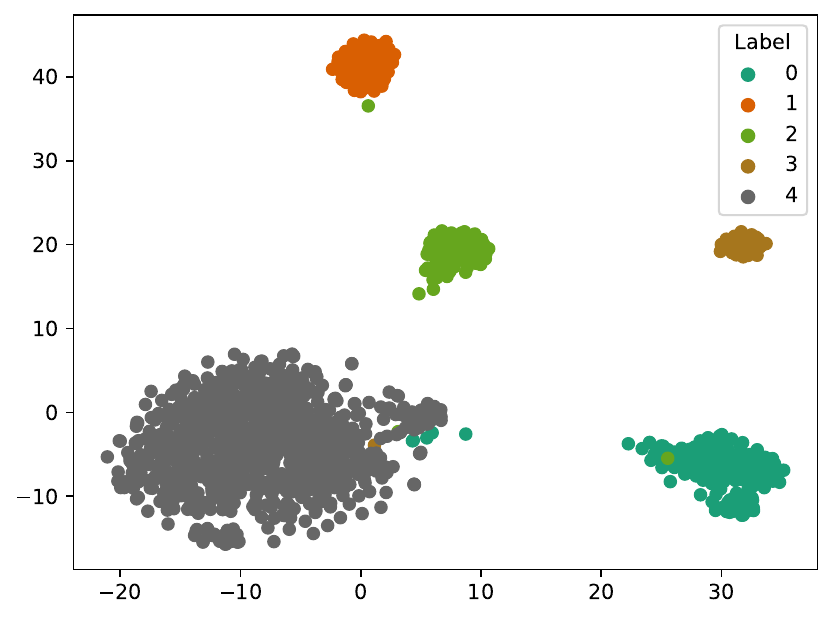}
    \end{minipage}%
    }
    \subfigure[LabelMe]{
    \centering
    \begin{minipage}[t]{0.23\linewidth}
	\includegraphics[width=1.0\linewidth]{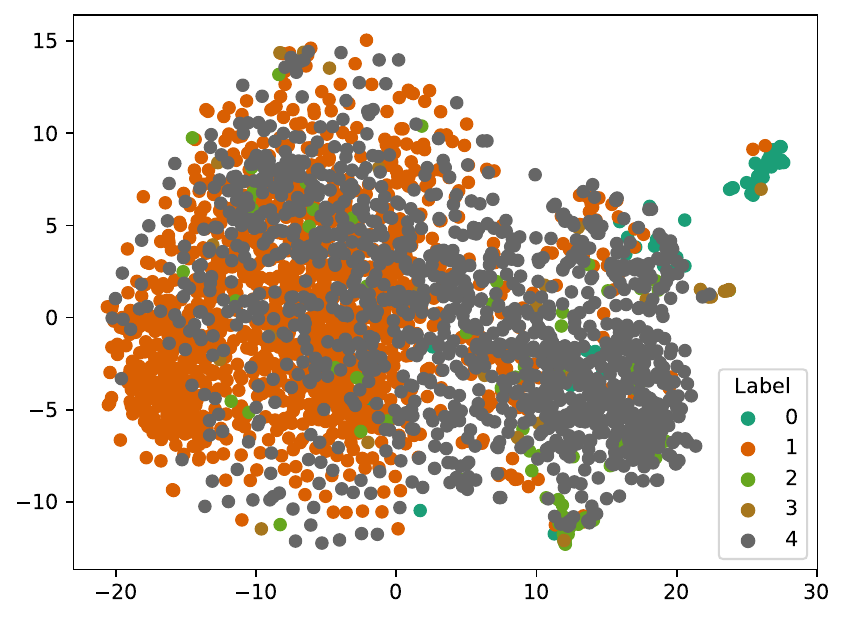}
    \end{minipage}%
    }
    \subfigure[SUN09]{
    \centering
    \begin{minipage}[t]{0.23\linewidth}
	\includegraphics[width=1.0\linewidth]{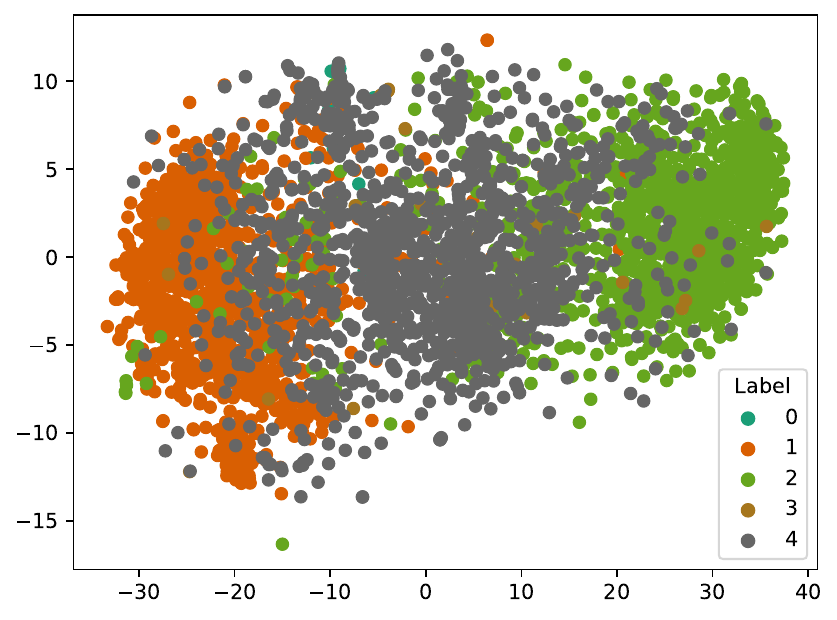}
    \end{minipage}%
    }
    \subfigure[VOC2007]{
    \centering
    \begin{minipage}[t]{0.23\linewidth}
	\includegraphics[width=1.0\linewidth]{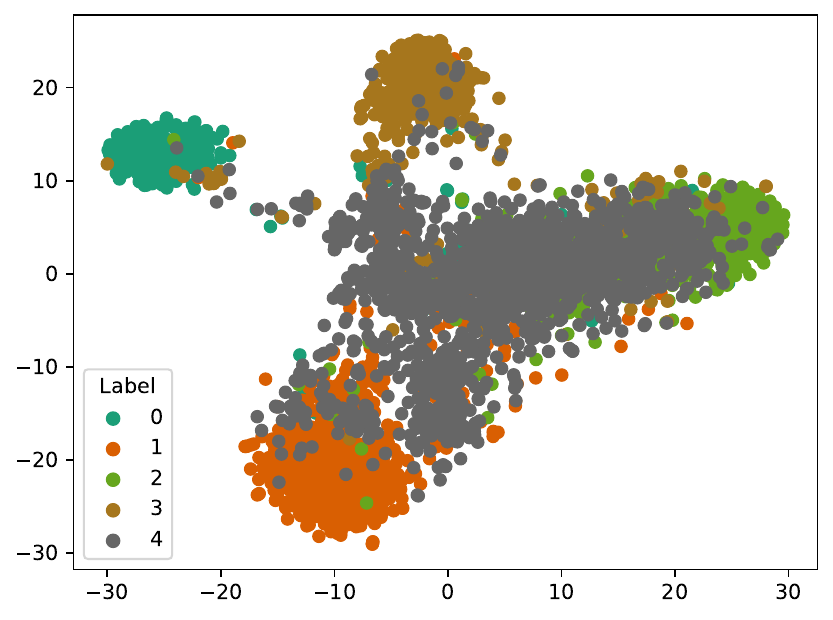}
    \end{minipage}%
    }
    \caption{t-SNE visualization of \Algnameabbr{} embedding in four unseen test domains of VLCS.}
    \vspace{1mm}
    \label{fig:appendix_dispel_embed_vlcs}
\end{figure*}

\begin{figure*}[tbh!]
    \centering
    \subfigure[Location 100]{
    \centering
    \begin{minipage}[t]{0.23\linewidth}
	    \includegraphics[width=1.0\linewidth]{figures/terra_incognita/embed_x_0.pdf}
    \end{minipage}%
    }
    \subfigure[Location 38]{
    \centering
    \begin{minipage}[t]{0.23\linewidth}
	    \includegraphics[width=0.99\linewidth]{figures/terra_incognita/embed_x_1.pdf}
    \end{minipage}%
    }
    \subfigure[Location 43]{
    \centering
    \begin{minipage}[t]{0.23\linewidth}
	    \includegraphics[width=0.99\linewidth]{figures/terra_incognita/embed_x_2.pdf}
    \end{minipage}%
    }
    \subfigure[Location 46]{
    \centering
    \begin{minipage}[t]{0.23\linewidth}
	    \includegraphics[width=0.99\linewidth]{figures/terra_incognita/embed_x_3.pdf}
    \end{minipage}%
    }
    \vspace{-4mm}
    \caption{t-SNE visualization of ERM embedding in four unseen test domains of Terra Incognita.}
    \label{fig:appendix_erm_embed_terra}
    \vspace{-3mm}
\end{figure*}

\begin{figure*}[tbh!]
\setlength{\abovecaptionskip}{0mm}
\setlength{\belowcaptionskip}{-5mm}
    \centering
    \subfigure[Location 100]{
    \centering
    \begin{minipage}[t]{0.23\linewidth}
	\includegraphics[width=1.0\linewidth]{figures/terra_incognita/masked_embed_x_0.pdf}
    \end{minipage}%
    }
    \subfigure[Location 38]{
    \centering
    \begin{minipage}[t]{0.23\linewidth}
	\includegraphics[width=1.0\linewidth]{figures/terra_incognita/masked_embed_x_1.pdf}
    \end{minipage}%
    }
    \subfigure[Location 43]{
    \centering
    \begin{minipage}[t]{0.23\linewidth}
	\includegraphics[width=1.0\linewidth]{figures/terra_incognita/masked_embed_x_2.pdf}
    \end{minipage}%
    }
    \subfigure[Location 46]{
    \centering
    \begin{minipage}[t]{0.23\linewidth}
	\includegraphics[width=1.0\linewidth]{figures/terra_incognita/masked_embed_x_3.pdf}
    \end{minipage}%
    }
    \caption{t-SNE visualization of \Algnameabbr{} embedding in four unseen test domains of Terra Incognita.}
    \vspace{1mm}
    \label{fig:appendix_dispel_embed_terra}
\end{figure*}

\begin{figure*}[tbh!]
    \centering
    \subfigure[Clipart]{
    \centering
    \begin{minipage}[t]{0.153\linewidth}
	    \includegraphics[width=1.0\linewidth]{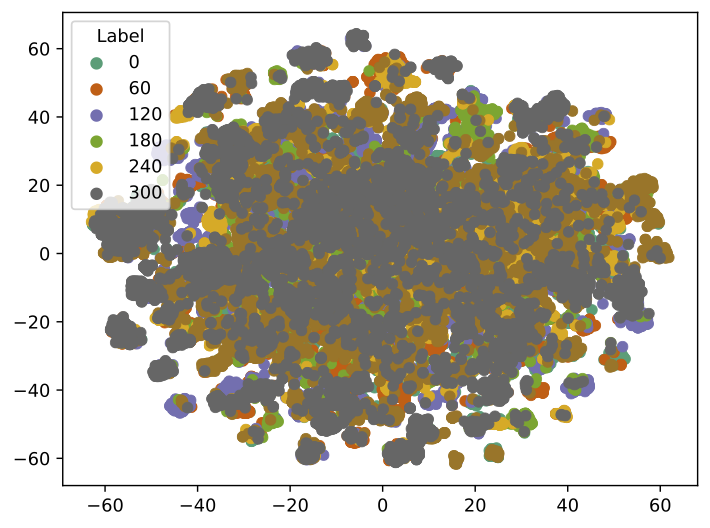}
    \end{minipage}%
    }
    \subfigure[Infograph]{
    \centering
    \begin{minipage}[t]{0.153\linewidth}
	    \includegraphics[width=0.99\linewidth]{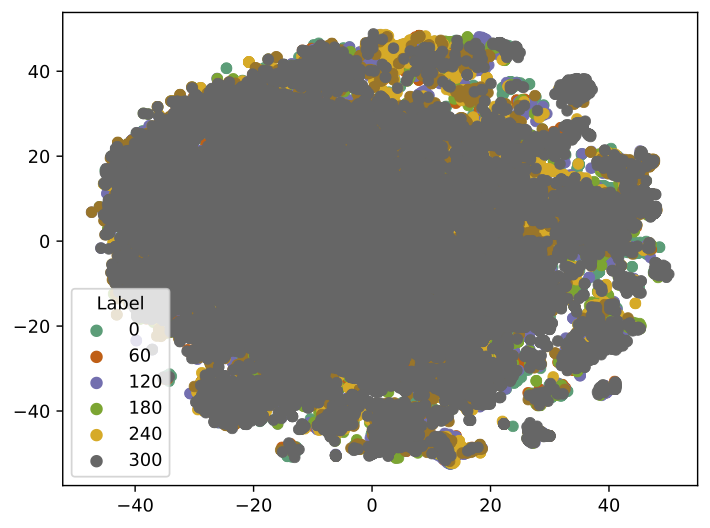}
    \end{minipage}%
    }
    \subfigure[Painting]{
    \centering
    \begin{minipage}[t]{0.153\linewidth}
	    \includegraphics[width=0.99\linewidth]{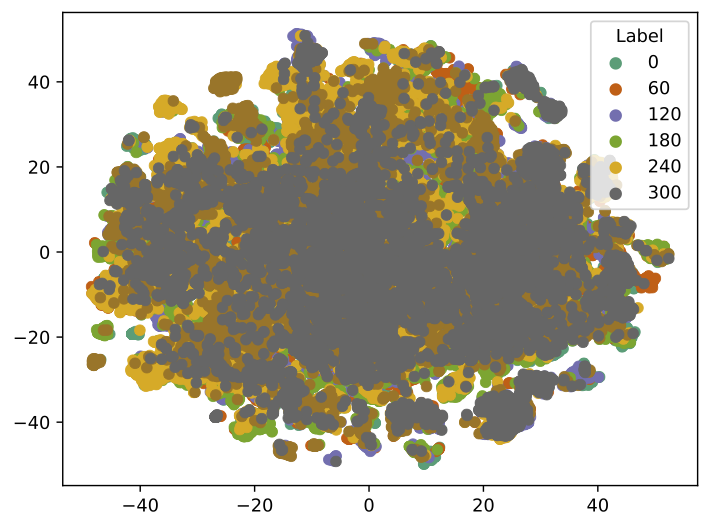}
    \end{minipage}%
    }
    \subfigure[Quickdraw]{
    \centering
    \begin{minipage}[t]{0.153\linewidth}
	    \includegraphics[width=0.99\linewidth]{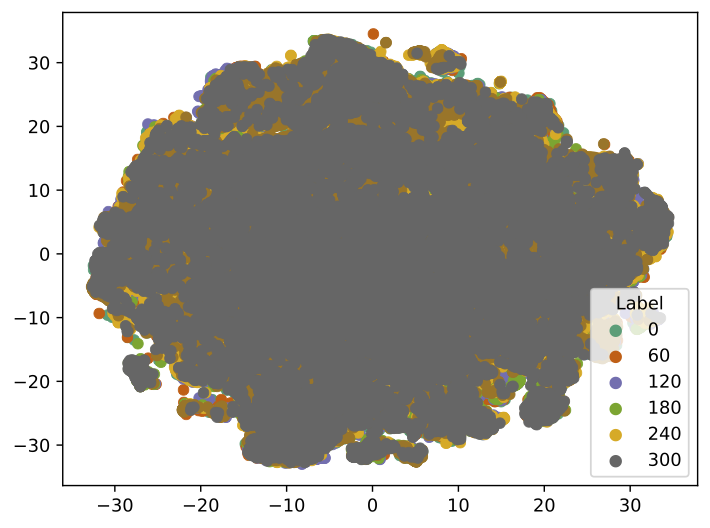}
    \end{minipage}%
    }
    \subfigure[Real]{
    \centering
    \begin{minipage}[t]{0.153\linewidth}
	    \includegraphics[width=1.0\linewidth]{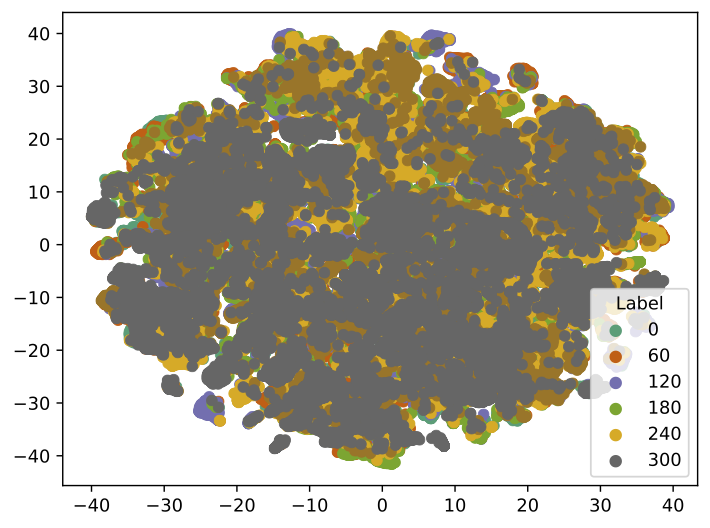}
    \end{minipage}%
    }
    \subfigure[Sketch]{
    \centering
    \begin{minipage}[t]{0.153\linewidth}
	    \includegraphics[width=1.0\linewidth]{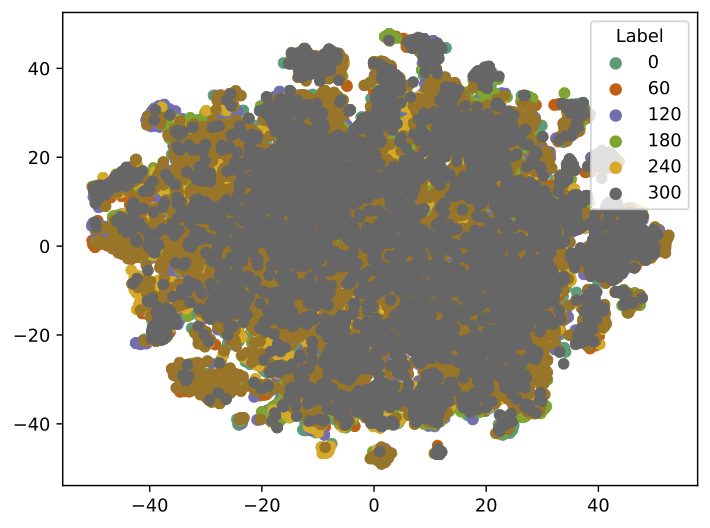}
    \end{minipage}%
    }
    \vspace{-4mm}
    \caption{t-SNE visualization of ERM embedding in six unseen test domains of DomainNet.}
    \label{fig:appendix_erm_embed_domainnet}
    \vspace{-3mm}
\end{figure*}

\begin{figure*}[tbh!]
\setlength{\abovecaptionskip}{0mm}
\setlength{\belowcaptionskip}{-5mm}
    \centering
    \subfigure[Clipart]{
    \centering
    \begin{minipage}[t]{0.153\linewidth}
	\includegraphics[width=1.0\linewidth]{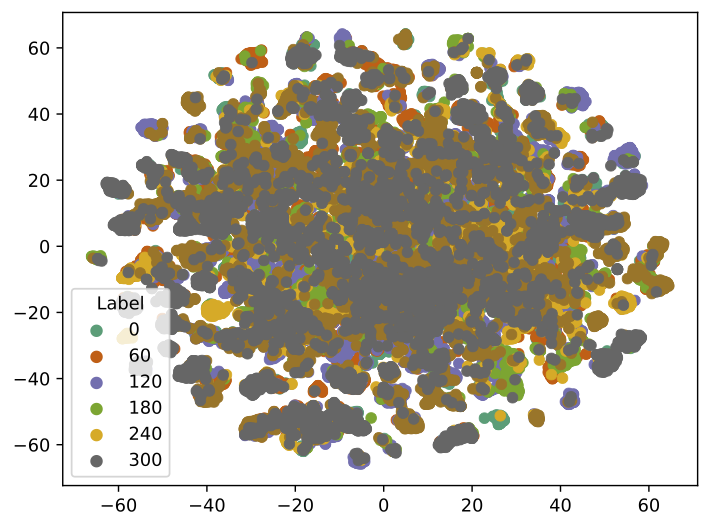}
    \end{minipage}%
    }
    \subfigure[Infograph]{
    \centering
    \begin{minipage}[t]{0.153\linewidth}
	\includegraphics[width=1.0\linewidth]{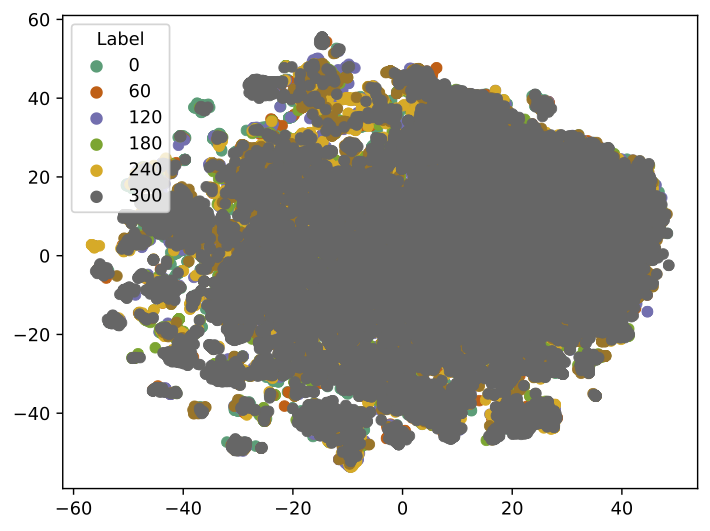}
    \end{minipage}%
    }
    \subfigure[Painting]{
    \centering
    \begin{minipage}[t]{0.153\linewidth}
	\includegraphics[width=1.0\linewidth]{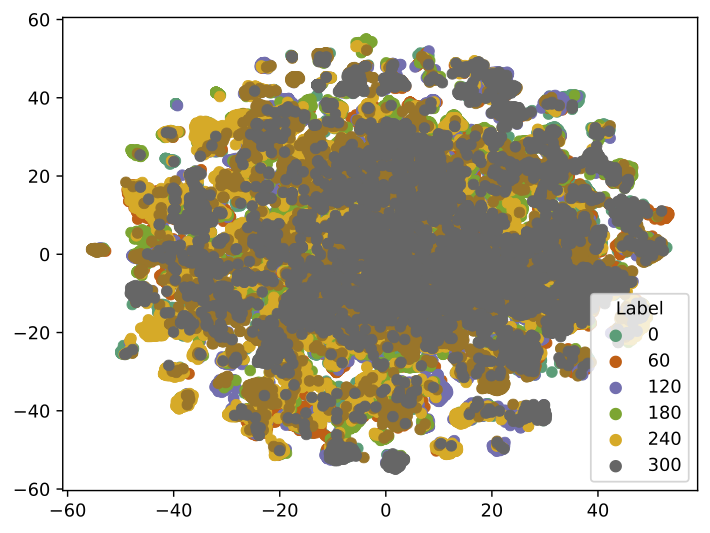}
    \end{minipage}%
    }
    \subfigure[Quickdraw]{
    \centering
    \begin{minipage}[t]{0.153\linewidth}
	\includegraphics[width=1.0\linewidth]{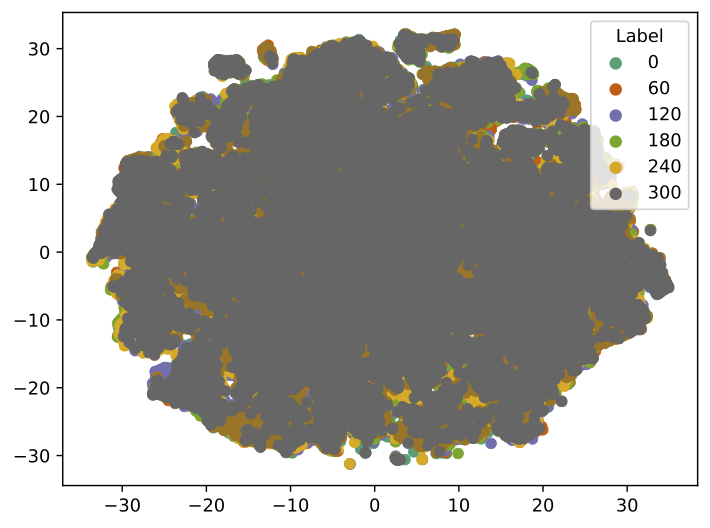}
    \end{minipage}%
    }
    \subfigure[Real]{
    \centering
    \begin{minipage}[t]{0.153\linewidth}
	\includegraphics[width=1.0\linewidth]{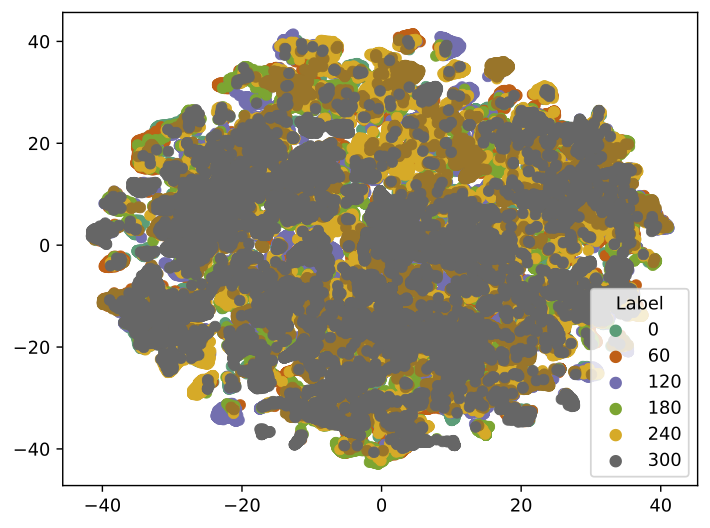}
    \end{minipage}%
    }
    \subfigure[Sketch]{
    \centering
    \begin{minipage}[t]{0.153\linewidth}
	\includegraphics[width=1.0\linewidth]{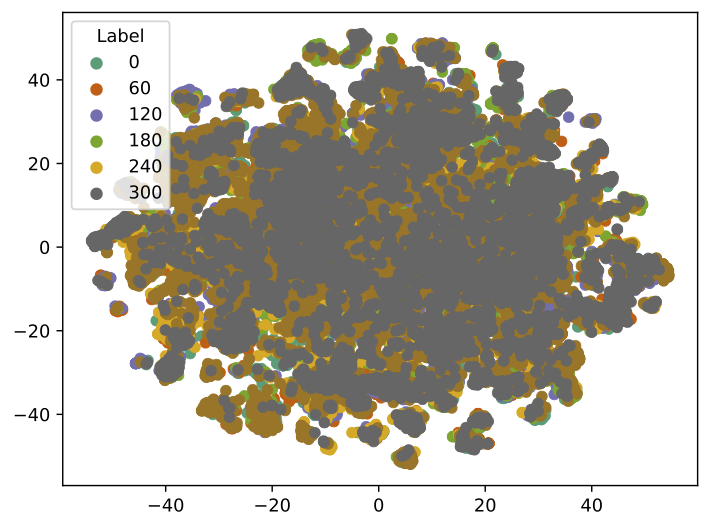}
    \end{minipage}%
    }
    \caption{t-SNE visualization of \Algnameabbr{} embedding in six unseen test domains of DomainNet.}
    \vspace{1mm}
    \label{fig:appendix_dispel_embed_domainnet}
\end{figure*}

\end{document}